\begin{document}

\title{Bypassing Skip-Gram Negative Sampling:\\Dimension Regularization as a More Efficient Alternative for Graph Embeddings}

\author{David Liu}
\affiliation{%
  \institution{Northeastern University}
  \department{Khoury College of Computer Sciences}
  \city{Boston}
  \state{MA}
  \country{USA}
}

\author{Arjun Seshadri}
\affiliation{%
  \institution{Amazon}
  \department{AWS}
  \city{San Francisco}
  \state{CA}
  \country{USA}
}

\author{Tina Eliassi-Rad}
\affiliation{%
  \institution{Northeastern University}
  \department{Khoury College of Computer Sciences \& \\Network Science Institute}
  \city{Boston}
  \state{MA}
  \country{USA}
}

\author{Johan Ugander}
\affiliation{%
  \institution{Stanford University}
  \department{Management Science \&\\Engineering}
  \city{Stanford}
  \state{CA}
  \country{USA}
}

\begin{abstract}
A wide range of graph embedding objectives decompose into two components: one that enforces similarity, attracting the embeddings of nodes that are perceived as similar, and another that enforces dissimilarity, repelling the embeddings of nodes that are perceived as dissimilar. Without repulsion, the embeddings would collapse into trivial solutions. Skip-Gram Negative Sampling (SGNS) is a popular and efficient repulsion approach that prevents collapse by repelling each node from a sample of dissimilar nodes.
In this work, we show that when repulsion is most needed and the embeddings approach collapse, SGNS node-wise repulsion is, in the aggregate, an approximate re-centering of the node embedding dimensions. Such dimension operations are more scalable than node operations and produce a simpler geometric interpretation of the repulsion. Our theoretical result establishes dimension regularization as an effective and more efficient, compared to skip-gram node contrast, approach to enforcing dissimilarity among embeddings of nodes.
We use this result to propose a flexible algorithm augmentation framework that improves the scalability of any existing algorithm using SGNS. The framework prioritizes node attraction and replaces SGNS with dimension regularization. We instantiate this generic framework for LINE and node2vec and show that the augmented algorithms preserve downstream link-prediction performance while reducing GPU memory usage by up to $33.3\%$ and training time by $23.4\%$. Moreover, we show that completely removing repulsion (a special case of our augmentation framework) in LINE reduces training time by $70.9\%$ on average, while increasing link prediction performance, especially for graphs that are globally sparse but locally dense. Global sparsity slows down dimensional collapse, while local density ensures that node attraction brings the nodes near their neighbors. In general, however, repulsion is needed, and dimension regularization provides an efficient alternative to SGNS.
\end{abstract}

\maketitle

\section{Introduction}
Graph embedding algorithms use the structure of graphs to learn node-level embeddings. In unsupervised and supervised graph embedding algorithms, loss functions attempt to preserve \emph{similarity} and \emph{dissimilarity}. Nodes that are similar in the input graph should have similar embeddings, while dissimilar nodes should have dissimilar embeddings~\citep{bohm2022attraction, tsitsulin2018verse}. The push and pull of the similarity and dissimilarity objectives are key: in the absence of a dissimilarity objective, the loss would be minimized by embedding all nodes at a single embedding point, a degenerate and useless embedding. Often, enforcing dissimilarity is much more expensive than similarity, owing to the generally sparse nature of graphs and the number of pairs of dissimilar nodes growing quadratically with the size of the graph. Enforcing dissimilarity is also complex for graphs because graph data frequently have missing edges or noise~\citep{young2021bayesian, newman2018network}. Here, we show that while many past works have focused on repelling pairs of dissimilar nodes, the repulsion can be replaced with a regularization of the embedding dimensions, which is more scalable. 

The skip-gram (SG) model is one of the most popular approaches to graph embeddings~\citep{ahmed2013distributed, yang2024negative}. In the SG model, a target node is similar to a source node if it is in the source node's neighborhood. Furthermore, skip-gram negative sampling (SGNS) is a dominant method to efficiently approximate dissimilarity preservation. Instead of repelling all pairs of dissimilar nodes, SGNS repels only a sample of dissimilar nodes per pair of similar nodes. SGNS is utilized in LINE~\cite{tang2015LINE} and node2vec~\cite{grover2016node2vec}, for instance, and has been shown to yield strong downstream performance. However, several analytical issues with SGNS have also been identified. First, SGNS introduces a bias by re-scaling the relative importance of preserving similarity and dissimilarity~\citep{rudolph2016exponential}. Second, with SGNS, in the limit as the number of nodes in the graph approaches infinity, the similarities among embeddings diverge from the similarities among nodes in the graph~\citep{davison2023asymptotics}. Although SGNS has been used to learn both graph and word embeddings~\citep{mikolov2013distributed, mimno2017strange}, we focus on the graph context because, for graph embeddings in particular, SGNS remains a popular method for preserving dissimilarity~\citep{chami2022machine}. 

In this paper, we propose a change in perspective and show that node repulsion in the SG model can be achieved via dimension centering. If $X$ is an embedding matrix where the rows are node embeddings, ``dimensions'' refers to the columns of $X$. We draw inspiration from advances in the self-supervised learning (SSL) literature, which show an equivalence between sample-contrastive learning and dimension-contrastive learning~\citep{garrido2022duality, bardes2022vicreg}. Sample-contrastive learning explicitly repels dissimilar pairs while dimension-contrastive learning regularizes dimensions. 

The known parallels between sample and dimension contrast, however, do not suggest whether SG loss functions can also be re-interpreted from the dimension perspective. We present a novel theoretical contribution that extends the dimension-contrastive approach to a new class of loss functions. Our findings show that while the dissimilarity term in the SG loss is not itself a dimension regularizer when the term is most needed to counteract the attraction of similarity, the dissimilarity preservation can be achieved via regularization. We begin by characterizing the degenerate embedding behavior when the dissimilarity term is removed altogether. We prove that, under initialization conditions,\footnote{Our asymptotic result holds when the learning rate approaches zero and the embedding initializations approach the origin, an assumption inherited from the implicit regularization in matrix factorization established in~\citet{gunasekar2017implicit}.} when only positive pairs are considered, the embeddings collapse into a low-dimensional space. However, as the dimensions approach collapse, the dissimilarity term approaches a dimension-mean regularizer. 

We operationalize the dimension-based approach with an algorithm augmentation framework. We augment existing algorithms using SGNS by making two modifications. First, the augmentation framework prioritizes similarity preservation over dissimilarity preservation. This is desirable because, in real-world graph data, the lack of similarity between two nodes does not necessarily suggest that the two nodes are dissimilar. Second, when the embeddings begin to collapse after optimizing only for similarity preservation for a fixed number of epochs, our augmentation framework repels nodes from each other using a regularizer that induces embedding dimensions centered on the origin.

In summary, our contributions are as follows:
\begin{enumerate}
    \item In Section \ref{sec:regularization}, we map node repulsion to dimension regularization for the SG model. We show that instead of shortcutting the full skip-gram loss function with SGNS and repelling a sample of pairs, the repulsion function can be approximated with a dimension-mean regularization. We prove that as the need for node repulsion grows, optimizing the regularizer converges to optimizing the skip-gram loss. Our result extends the equivalence between sample-contrastive objectives and dimension regularization established in self-supervised learning to a new class of loss functions.
    \item In Section \ref{sec:algs}, we introduce a generic algorithm augmentation framework that prioritizes node attraction and replaces SGNS with occasional dimension regularization for any existing SG algorithm. We instantiate the augmentation framework for node2vec and LINE, reducing the repulsion complexity from $\mathcal{O}(n)$ to $\mathcal{O}(d)$ vector additions per epoch. (The number of embedding dimensions $d$ is much less than the number of nodes $n$.)
    \item In Section \ref{sec:eval}, we show that our augmentation framework reduces runtime and memory usage while also preserving link-prediction performance. Moreover, in globally sparse networks with high local density, removing repulsion altogether, a special case of our framework, even improves performance for LINE. However, in networks with low local density, repulsion is needed, and dimension regularization provides an efficient solution.
\end{enumerate}

\section{Related Works}
In this section, we review the popular use of SGNS within graph embeddings as well as its limitations. Our approach is also similar in spirit to the growing body of literature on non-contrastive learning and can be used in concert with existing scalable graph learning methods.

\subsection{Skip-Gram Negative Sampling}
SGNS was introduced in word2vec by \citet{mikolov2013distributed} as an efficient method for learning word embeddings. While the softmax normalization constant is costly to optimize, \citet{mikolov2013distributed} modeled SGNS after Noise Contrastive Estimation (NCE)~\cite{gutmann2012noise}, which learns to separate positive samples from samples drawn from a noise distribution. SGNS has since been adopted for graph representation learning, where it is utilized in both unsupervised~\citep{grover2016node2vec, tang2015LINE, perozzi2014deepwalk} and supervised~\citep{yang2016revisiting} skip-gram models.  


At the same time, there are many known limitations of SGNS. \citet{rudolph2016exponential} place SGNS embeddings within the framework of Exponential Family Embeddings and note that SGNS downweights the magnitude of the negative update and leads to biased embeddings, relative to the gradients of the non-sampled objective. Second, \citet{davison2023asymptotics} examine the limiting distribution of embeddings learned via SGNS and show that the distribution decouples from the true sampling distribution in the limit. Last, it has also been shown that the optimal noise distribution and the corresponding parameters can vary by dataset~\citep{yang2020understanding}.

We would also like to note that while the motivations are similar, SGNS differs from the negative sampling that has arisen in the self-supervised learning literature~\citep{robinson2021contrastive}. In self-supervised learning, the negative samples are generally other nodes in the training batch.

\subsection{Non-contrastive Self-Supervised Learning}
Energy-Based Models in self-supervised learning are a unified framework for balancing similarity and dissimilarity~\citep{lecun_energy-based_2007}. As in our decomposition, energy-based models ensure that similar pairs have low energy and dissimilar pairs have high energy. Within energy-based models, there has been more focus across both vision and graph representation learning on contrastive models, which explicitly repel dissimilar pairs~\citep{in2023similarity-contrastive, li2023homogcl, yang2023batchsampler, zhang2023contrastive}. However, given the computational complexities of pairwise contrast, there is a growing body of work on non-contrastive representation learning methods, which do not use negative samples. Embedding collapse is the main challenge facing non-contrastive methods, and several mitigation strategies have emerged, such as asymmetric encoders in SimSiam~\citep{chen2021exploring}, momentum encoders in BYOL~\citep{grill2020bootstrap}, and redundancy reduction in Barlow Twins~\citep{zbontar2021barlow}. \citet{garrido2022duality} establish a duality between dimension-regularization based non-contrastive approaches and standard contrastive learning; however, the work specifically analyzes the squared-loss term, and there are no existing works, to our knowledge, establishing a connection between skip-gram loss and non-contrastive methods.

\subsection{Scalable and Efficient Graph Learning}
Past works on developing scalable and efficient representation learning for large graphs have focused on \emph{data-driven} scalability, utilizing node/edge sampling~\cite{hamilton2017inductive, chen2018fastgcn, zeng2020graphsaint}, graph partitioning~\cite{chiang2019clustergcn, zhu2019graphvite}, and graph coarsening~\cite{liang2021mile}.
The sampling-based approaches mitigate the exponential growth of neighbors as the depth of message-passing layers increases in GNNs. One approach is to sample at the layer level, such as in GraphSAGE~\cite{hamilton2017inductive} and FastGCN~\cite{chen2018fastgcn}, where the neighbors of each node are sampled in each layer. An alternative approach, such as GraphSAINT~\cite{zeng2020graphsaint}, is to first sample a subgraph and then train a full GNN, without layer sampling, on the subgraph.
Likewise, partitioning-based approaches also prevent neighborhood explosion. For instance, ClusterGCN~\cite{chiang2019clustergcn} performs message passing within clusters to reduce neighborhood scope, while GraphVite~\cite{zhu2019graphvite} leverages partitioning to train in parallel. 
Finally, coarsening approaches leverage the multi-scale nature of graphs. For example, MILE~\cite{liang2021mile} first learns embeddings for the structural backbone of a graph, which is much smaller, before refining the backbone embeddings onto the full, original graph. Our approach is orthogonal to data-driven approaches to scalability and can be used in conjunction; for example, our dimensionality-regularization bypass improves the efficiency of learning skip-gram embeddings for graph backbones or cluster partitions.

Instead, our approach is more similar to scalability approaches that propose efficiency bypasses for loss functions or optimization processes. 
For instance, \citet{zhu2025transformers} identify that the outputs of normalization layers in transformers approximate the hyperbolic tangent function, and replacing normalization layers with element-wise $\tanh$ transformations preserves accuracy while reducing training time. Meanwhile, \citet{loveland2025understanding} observe that one purpose of negative sampling in collaborative filtering is to increase the rank of the embedding matrix; the authors show that replacing negative sampling with a regularization on matrix rank, which is more efficient, effectively reduces compute cost. In contrast to these two prior works proposing bypasses, our work contributes a theoretical result showing that as the number of nodes increases, the difference between the expensive operation, SGNS, and the efficiency bypass, dimension-mean regularization, vanishes.
\section{From Node Repulsion to Dimension Regularization}\label{sec:regularization}

In this section, we introduce a generic loss decomposition where a function $P$ operationalizes similarity preservation and a ``negative'' function $N$ achieves dissimilarity preservation. We then show that instead of optimizing negative functions with costly node repulsions, we can instead regularize dimensions. Crucially, in Section \ref{sec:sg-norm}, we show that when node repulsion is needed, the negative function in the skip-gram loss can be optimized via dimension regularization.

Using notation introduced in Table \ref{tab:notation}, the decomposition is as follows: given an embedding matrix $X \in \mathbb{R}^{n \times d}$ and a similarity matrix $S \in \{0, 1\}^{n \times n}$, where $S_{ij} = 1$ if nodes $i$ and $j$ are similar, a generic graph embedding loss function $L(X, S)$ can be written as:
\begin{equation}\label{eqn:decomposition}
    L(X, S) = P(X, S) + N(X, S).
\end{equation}

\begin{table}[t!]
    \centering
    \caption{Notations used in this paper.}
    \Description[Notation table]{One column of symbols on the left and another column with the corresponding meanings on the right.}
    \begin{tabular}{p{0.15\linewidth} p{0.7\linewidth}}
        \toprule
         \textbf{Symbol} &  \textbf{Meaning} \\ \midrule
         $G, V, E$ & Graph $G$ with nodes $V$ and edges $E$\\
         $n, m$ & number of nodes and edges respectively\\
         $d$ & number of embedding dimensions\\
         $N$, $P$ & negative and positive loss functions\\
         $S$ & similarity matrix $\in \{0, 1\}^{n \times n}$\\
         $X$ & node embedding matrix $\in \mathbb{R}^{n \times d}$\\
         $X_i$ & $i^{th}$ row of $X$, as a column vector\\
         $X_{.j}$ & $j^{th}$ column of $X$, as a column vector\\
         $P_\alpha$ & Probability distribution with parameter $\alpha$ \\
         $p, q$ & node2vec random-walk bias parameters \cite{grover2016node2vec} \\
         $k$ & number of negative samples per positive pair\\
         $b$ & average number of positive pairs per node\\
         $\eta$ & learning rate\\
         $D_x$ & diagonal matrix where $x$ is the diagonal\\
         $\mathcal{C}$ & the constriction of the embeddings (Def. \ref{def:constriction})\\
         $\Vec{1}, \mathbf{1}$ & a vector and a matrix of all ones, respectively\\
         $[d]$ & $\{1, 2, \dots, d\}$\\
        \bottomrule
    \end{tabular}

    \label{tab:notation}
\end{table}

The decomposition in Eq.~\eqref{eqn:decomposition} applies to nearly all unsupervised graph embedding objectives as well as many supervised learning objectives, where supervision is provided in the form of node labels. In the graph embedding survey by \citet{chami2022machine}, the decomposition applies to all unsupervised methods except for Graph Factorization~\citep{ahmed2013distributed}, which does not include a negative function $N$. Examples of popular decomposable loss functions are matrix reconstruction error (e.g., spectral embeddings) as well as softmax (e.g. node2vec~\citep{grover2016node2vec} and LINE~\citep{tang2015LINE}). The decomposition also applies to supervised methods that regularize for graph structure ($\beta > 0$ as defined in \citet{chami2022machine}), such as Neural Graph Machines~\citep{bui2018neural} and Planetoid~\citep{yang2016revisiting}. 

Given that graphs are sparse, performing gradient descent on $N$ is costly as the gradient update repels all dissimilar pairs, resulting in $\mathcal{O}\left(n^2\right)$ vector additions per epoch. In this paper, we build on the argument that the costly \emph{node-wise} operation can be replaced with a more efficient \emph{dimension-wise} operation, as summarized in Figure \ref{fig:tree}.

\begin{figure}[ht]
    \centering
    \includegraphics[width = 0.9\linewidth]{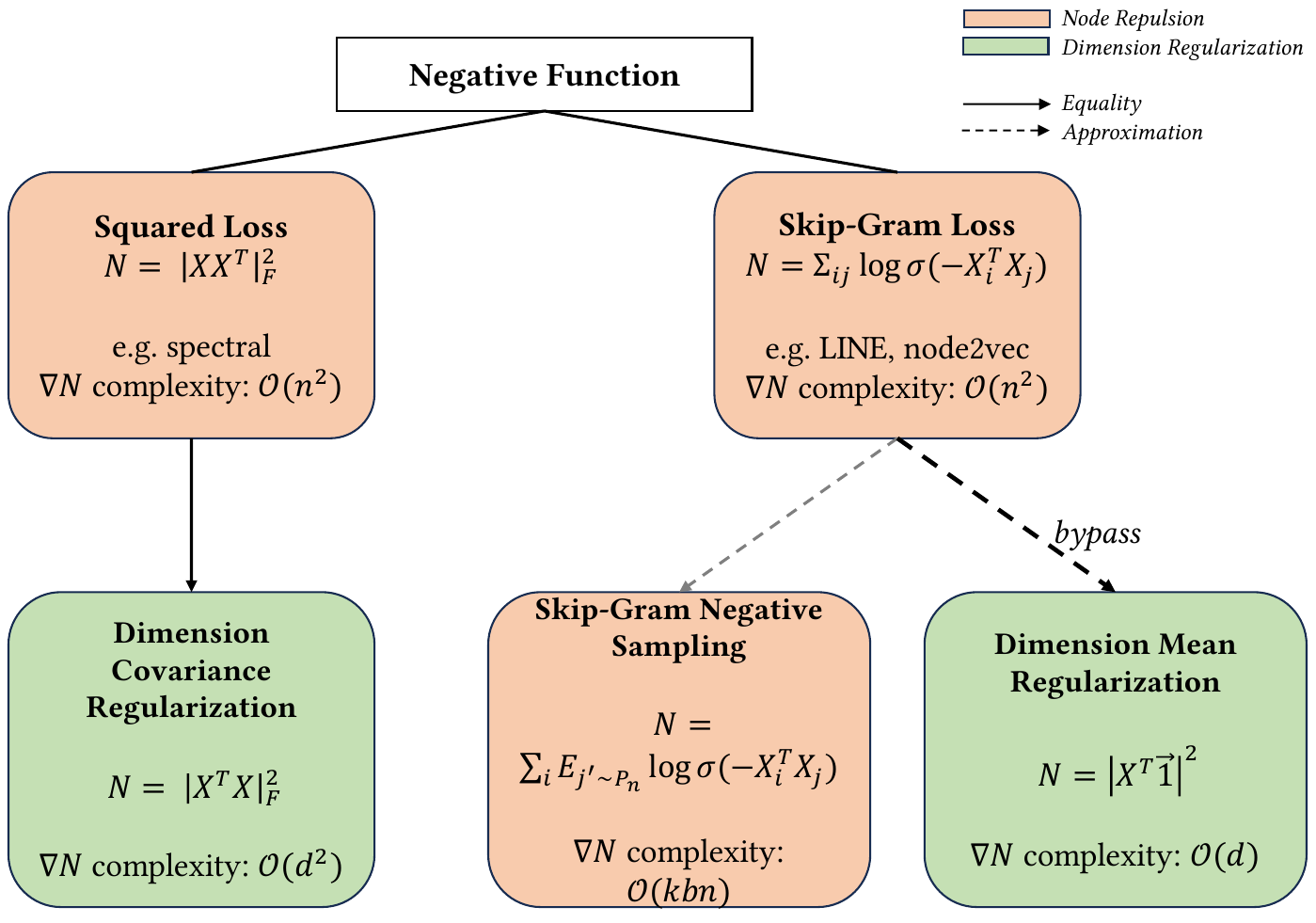}
    \Description[Conceptual tree]{A tree diagram summarizing our theoretical contribution. The root node is labeled "negative function" and branches into "squared loss" and "skip-gram loss". An example of squared loss is dimension-covariance regularization. With skip-gram loss, one descendant is SGNS while the other is our dimension-mean regularization. Each method is labeled with its computational complexity.}
    \caption{
        Nearly all unsupervised and many supervised graph embedding loss functions define a ``negative'' function that repels embeddings of dissimilar nodes. We show that instead of repelling pairs of nodes (orange), which is costly, the negative function in the popular skip-gram (SG) loss can be approximated with a dimension-mean regularizer. The regularizer is efficient given that $d \ll n$. Our result establishing a connection between the skip-gram loss and dimension-mean regularization is novel. Runtime complexities are expressed in terms of the number of vector additions per epoch with details in Section \ref{sec:reweighting}.
    }
    \label{fig:tree}
\end{figure}

Below, we will map two dominant negative functions found in graph embedding algorithms to dimension regularizations. The two loss functions are spectral loss functions and skip-gram loss functions. The mapping of spectral loss functions to dimension covariance regularization in Section \ref{sec:ase-norm} parallels recent approaches to non-contrastive learning~\citep{bardes2022vicreg, garrido2022duality}. Our novel contribution, detailed in Section \ref{sec:sg-norm}, is a mapping from the skip-gram loss to a regularizer that induces origin-centered dimensions. Proof for all propositions below are included in Appendix \ref{sec:proofs}.

\subsection{Dimension Regularization for Spectral Embeddings} \label{sec:ase-norm}

In the case of Adjacency Spectral Embeddings (ASE)~\citep{sussman2012consistent}, which are equivalent to taking the leading eigenvectors of the adjacency matrix, the matrix $S$ is the adjacency matrix $A \in \{0, 1\}^{n \times n}$. 
\begin{align}
    L_{ASE}(X, S) &= \| S - XX^T \|_2^2,\\
    P_{ASE}(X, S) &= -2 \sum_{i,j \mid S_{ij} = 1} X_i^T X_j + \|S\|_F^2,\\
    N_{ASE}(X, S) &= \| XX^T \|_F^2.
\end{align}

On one hand, performing gradient descent on $N_{ASE}$ can be interpreted as repelling all pairs of embeddings where the repulsion magnitude is the dot product between embeddings. If $\eta$ is a learning rate and $t$ is the step count, the embedding for node $i$ is updated as:
\begin{equation}\label{eqn:ase-sample-contrast}
    X_i^{t+1} = X_i^t - \eta \sum_{i' \in V} \left(X_i^TX_{i'}\right) X_{i'} ~~ \forall i \in V.
\end{equation}
The same negative function can also be written as a dimension covariance regularization:
\begin{proposition}\label{prop:ase-regularization}
    $N_{ASE}$ is equivalent to the regularization function $\| X^TX\|_F^2$ which penalizes covariance among dimensions.
\end{proposition}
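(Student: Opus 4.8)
The plan is to prove the identity $\|XX^T\|_F^2 = \|X^TX\|_F^2$ directly and then read off the regularization interpretation of the right-hand side. The cleanest route is through the trace. Since $XX^T$ is symmetric, I would write the Frobenius norm as a trace of a product,
\begin{equation}
\|XX^T\|_F^2 = \operatorname{tr}\!\big((XX^T)^T XX^T\big) = \operatorname{tr}(XX^TXX^T),
\end{equation}
and then invoke the cyclic invariance of the trace to move the leading $X$ to the end, obtaining $\operatorname{tr}(X^TXX^TX) = \operatorname{tr}\!\big((X^TX)^T X^TX\big) = \|X^TX\|_F^2$. This establishes the equivalence $N_{ASE}(X,S) = \|X^TX\|_F^2$ with no further work.

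As a sanity check and an alternative argument, I would confirm the identity via the singular value decomposition of $X$: if $X$ has nonzero singular values $\sigma_1, \dots, \sigma_r$, then both $XX^T$ and $X^TX$ are positive semidefinite with the same nonzero eigenvalues $\sigma_1^2, \dots, \sigma_r^2$, differing only in their number of zero eigenvalues. Because the Frobenius norm of a symmetric matrix is the square root of the sum of its squared eigenvalues, both sides equal $\sum_{i=1}^r \sigma_i^4$, confirming the equality independently of the trace manipulation.

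It remains to justify the phrase ``penalizes covariance among dimensions.'' Here I would note that the $(j,k)$ entry of the $d \times d$ matrix $X^TX$ is exactly the inner product $X_{.j}^T X_{.k}$ between the $j$-th and $k$-th embedding dimensions, i.e. their (uncentered) covariance. Expanding the Frobenius norm entrywise gives
\begin{equation}
\|X^TX\|_F^2 = \sum_{j \in [d]} \|X_{.j}\|_2^4 + \sum_{j \neq k} \big(X_{.j}^T X_{.k}\big)^2,
\end{equation}
so driving $\|X^TX\|_F^2$ down simultaneously shrinks each dimension's squared norm (the diagonal terms) and each pairwise dimension covariance (the off-diagonal terms), which is precisely the decorrelation-style penalty used in non-contrastive learning.

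I do not anticipate a genuine obstacle, as the result is a standard linear-algebra identity. The only points requiring care are keeping track of the symmetry of $XX^T$ and $X^TX$ when rewriting the Frobenius norm as a trace, and being explicit that the ``covariance'' here is uncentered, since the dimensions are not mean-subtracted, so that the diagonal terms penalize dimension magnitudes rather than variances in the strict statistical sense.
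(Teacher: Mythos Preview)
Your proposal is correct and uses essentially the same approach as the paper: rewrite $\|XX^T\|_F^2$ as a trace and apply the cyclic property to obtain $\|X^TX\|_F^2$. Your additional SVD sanity check and the explicit expansion justifying the ``covariance among dimensions'' phrasing go beyond what the paper includes, but the core argument is identical.
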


With Proposition \ref{prop:ase-regularization}, we can re-interpret the gradient descent updates in Eq.~\eqref{eqn:ase-sample-contrast} as collectively repelling dimensions. The gradient update can now be written in terms of dimensions:
\begin{equation}
    X_{.j}^{t+1} = X_{.j}^t - \eta \sum_{j' \in [d]} \left(X_{.j}^TX_{.j'}\right) X_{.j'} ~~ \forall j \in [d].
\end{equation}

\subsection{Dimension Regularization for Skip-Gram Embeddings} \label{sec:sg-norm}
We now introduce a novel dimension-based approach for skip-gram embeddings. For skip-gram embeddings, the similarity matrix is defined such that $S_{ij} = 1$ if node $j$ is in the \emph{neighborhood} of $i$. For first-order LINE, the neighborhood for node $i$ is simply all nodes connected to $i$, while for node2vec, the neighborhood is defined as all nodes within the context of $i$ on a random walk. The skip-gram (SG) loss function can be decomposed as:
\begin{align}
    L_{SG}(X, S) &= - \sum_{i,j} S_{ij} \log \sigma\left(X_i^T X_j\right) + (1 - S_{ij}) \log \sigma\left(- X_i^T X_j\right),\\ \label{eqn:sg-loss}
    P_{SG}(X, S) &= - \sum_{i,j \mid S_{ij} = 1} \log \sigma\left(X_i^T X_j\right), \\
    N_{SG}(X, S) &= - \sum_{i,j \mid S_{ij} = 0} \log \sigma\left(- X_i^T X_j\right). \label{eqn:sg-negative}
\end{align}

Our goal is to map $N_{SG}$ to a dimension regularization. 
Recall that this work is motivated by the fact that the purpose of $N_{SG}$ is to prevent the similarity $\sigma\left(X_i^TX_j\right)$ from increasing for all $i, j$; without $N_{SG}$, trivial embedding solutions can emerge that maximize similarity for all pairs of nodes, not just similar pairs. 

\paragraph{Guaranteed collapse.} To measure the onset of the degenerate condition in which all pairs of nodes are similar, we define the constriction $\mathcal{C}$ of a set of embeddings to be the minimum dot product between any pair of nodes: 
\begin{definition}[Constriction] The constriction $\mathcal{C}$ of an embedding matrix $X$ is defined as:
$\mathcal{C} = \min_{i,j \in [n]\times[n]} X_i^TX_j$.
\label{def:constriction}
\end{definition}

Geometrically, the embedding constriction is maximized when embeddings are radially squeezed and growing in magnitude, that is, collapsed. 
Proposition \ref{prop:guaranteed-oversmoothing} states that if we remove $N_{SG}$ altogether and the embeddings are initialized with sufficiently small norm and learning rate, the degenerate collapse will inevitably arise during gradient descent. 

\begin{proposition}\label{prop:guaranteed-oversmoothing}
    As the Euclidean norm of the initial embeddings and the learning rate approach zero, then in the process of performing gradient descent on $P_{SG}$, there exists a step $t$ such that for all gradient updates after $t$, the constriction $\mathcal{C}$ is positive and monotonically increasing.
\end{proposition}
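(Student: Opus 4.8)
The plan is to pass to the gradient-flow limit (legitimate because $\eta \to 0$, so gradient descent converges to the flow $\dot X = -\nabla P_{SG}$ on every compact time interval) and to split the argument into a \emph{locking} phase and an \emph{entering} phase. First I would compute the gradient. Working in the natural setting of an undirected connected graph with symmetric $S$ and no isolated nodes, and using $\tfrac{d}{dz}\log\sigma(z)=\sigma(-z)$, the flow is
\[
\dot X_i = 2\sum_{j\,:\,S_{ij}=1}\sigma(-X_i^T X_j)\,X_j,
\]
so every embedding drifts toward a strictly positive-weighted combination of its neighbors. Since $\sigma>0$ always, the sign structure of these weights will be the engine of the proof.

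The central observation is that the open cone $\{X : X_i^T X_j > 0 \ \forall i,j\}$ is forward invariant and that $\mathcal{C}$ strictly increases inside it, so it suffices to show the trajectory enters this cone at some finite time. Indeed, once every pairwise dot product is positive,
\[
\frac{d}{dt}\bigl(X_i^T X_j\bigr)
= 2\sum_{k\,:\,S_{ik}=1}\sigma(-X_i^T X_k)\,X_k^T X_j
+ 2\sum_{k\,:\,S_{jk}=1}\sigma(-X_j^T X_k)\,X_i^T X_k
> 0,
\]
because $\sigma>0$, every dot product appearing is positive, and each node has at least one neighbor. Hence every pairwise dot product is strictly increasing, so for $t_2>t_1$ each $X_i^T X_j(t_2) > X_i^T X_j(t_1) \ge \mathcal{C}(t_1)$, which forces $\mathcal{C}(t_2) > \mathcal{C}(t_1)$; thus $\mathcal{C}$ is positive and strictly increasing for all later times.

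For the entering phase I would exploit the small-initialization limit. When the initial norms are $O(\epsilon)$, all dot products are $O(\epsilon^2)$ and $\sigma(-X_i^T X_j)=\tfrac12+O(\epsilon^2)$, so the flow linearizes (up to a harmless positive constant) to $\dot X = SX$, with column-wise solution $X_{.k}(t)=e^{St}X_{.k}(0)$. As $S$ is nonnegative and irreducible, Perron--Frobenius gives a simple leading eigenvalue $\lambda_1>0$ with entrywise-positive eigenvector $v_1$; the leading mode dominates, so $X_i(t)\approx e^{\lambda_1 t}(v_1)_i\,w$ with $w=X(0)^T v_1$, hence $X_i^T X_j \approx e^{2\lambda_1 t}(v_1)_i(v_1)_j\,\|w\|^2 > 0$ for every pair (the genericity condition $w\neq 0$ excludes only a measure-zero set of initialization directions). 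Crucially, alignment with $v_1$ occurs on the $O(1)$ timescale set by the spectral gap, independent of $\epsilon$, whereas the embeddings reach $O(1)$ norm, where the linearization fails, only at time $\sim \lambda_1^{-1}\log(1/\epsilon)$; so as $\epsilon\to 0$ the window in which the dynamics are both linear and aligned grows without bound, and I can select a finite $t_0$ at which all dot products are positive.

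I expect the main obstacle to be making this entering phase rigorous: controlling the nonlinear correction uniformly over a window whose length diverges as $\epsilon\to 0$, so that the true trajectory provably tracks the Perron direction until the dot products turn positive. The natural tool is a Gr\"onwall-type estimate bounding the deviation from the linearized flow by the accumulated $O(\epsilon^2)$ nonlinearity, which closes the argument precisely in the regime of the implicit low-rank bias of gradient descent on factorized objectives established by \citet{gunasekar2017implicit} and invoked in the statement's footnote. Once entry into the positive cone at a finite $t_0$ is established, the locking phase applies verbatim to the exact nonlinear flow, giving the claim for all $t\ge t_0$; the passage back from the flow to gradient descent then follows from $\eta\to 0$.
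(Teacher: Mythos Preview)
Your two-phase structure matches the paper's exactly, and your locking phase (once all pairwise dot products are positive, $\mathcal{C}$ strictly increases) is the continuous-time version of the paper's discrete expansion of $\langle X_i^{t+1},X_j^{t+1}\rangle$. The entering phase, however, uses a genuinely different mechanism. The paper linearizes $\sigma$ to \emph{first} order, $\sigma(z)\approx\tfrac12+\tfrac14 z$, recognizes the resulting update as gradient descent on the matrix-completion objective $\|\mathbf{1}_{S>0}\odot(\mathbf{1}-\tfrac12 XX^T)\|_F^2$, and invokes the implicit-regularization theorem of \citet{gunasekar2017implicit} to conclude that $XX^T$ is driven toward the minimum-nuclear-norm completion $2\mathbf{1}$ on a connected graph, hence $\mathcal{C}\to 2>0$. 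You instead linearize to \emph{zeroth} order, obtaining the linear flow $\dot X = SX$, and apply Perron--Frobenius directly to the irreducible nonnegative $S$: every column of $X$ aligns with the entrywise-positive Perron vector $v_1$, forcing $X_i^T X_j\approx e^{2\lambda_1 t}(v_1)_i(v_1)_j\|w\|^2>0$. Your route is more elementary---it avoids the black-box appeal to Gunasekar and in fact supplies the spectral mechanism that underlies that result in this rank-one setting---and your timescale-separation argument (alignment on the $O(1)$ spectral-gap scale versus linearization breakdown at time $\sim\lambda_1^{-1}\log(1/\epsilon)$) is more explicit than anything in the paper. The paper's route, by contrast, buys a direct tie to the implicit-bias literature and names the concrete target $XX^T\to 2\mathbf{1}$. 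Both arguments leave the same residual informality you already flag: rigorously controlling the nonlinear correction over an alignment window whose length diverges as $\epsilon\to 0$.
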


The proof sketch for Proposition \ref{prop:guaranteed-oversmoothing} is as follows: as the embeddings are initialized closer to the origin, gradient descent on $P_{SG}$ approaches gradient descent on the matrix completion loss function: $\| \mathbf{1}_{S > 0} \odot \left(\mathbf{1} - \frac{1}{2}XX^T\right)\|_F^2$, where $\mathbf{1}_{S > 0}$ is the indicator matrix for whether entries of $S$ are positive. From \citet{gunasekar2017implicit}, gradient descent implicitly regularizes matrix completion to converge to the minimum nuclear solution; this implicit regularization drives all dot products to be positive, not just pairs of embeddings corresponding to connected nodes.

\paragraph{Approaching dimension regularization.} Now, we show that as constriction increases, performing gradient descent on $N_{SG}$ approaches optimizing a dimension regularizer. That is, when repulsion is most needed and the embeddings approach collapse due to similarity preservation, repulsion can be achieved via regularization.

First, we map $N_{SG}$ to an ``all-to-all'' node repulsion. While $N_{SG}$ only sums over negative node pairs ($i, j$ where $j$ is \emph{not} in the neighborhood of $i$), for large, sparse graphs we can approximate $N_{SG}$ with the objective $N'_{SG}$ which sums over all pairs of nodes:
\begin{equation}\label{eqn:sg-all-to-all}
    N_{SG}' = - \sum_{i,j} \log \sigma\left(-X_i^TX_j\right).
\end{equation}

Proposition \ref{prop:all-to-all-approx} states that if the embedding norms are bounded and the constriction $\mathcal{C} > 0$, then, in the limit of $n$, the difference between the gradient of $N'_{SG}$ and $N_{SG}$ approaches zero. 

\begin{proposition} \label{prop:all-to-all-approx}
If all embeddings have norms that are neither infinitely large or vanishingly small and the embedding constriction $\mathcal{C} > 0$, then, as the number of nodes in a sparse graph grows to infinity, the gradients of $\nabla N_{SG}$ and $\nabla N'_{SG}$ converge:
\begin{equation}
\lim_{n \to \infty} \frac{\|\nabla N'_{SG} - \nabla N_{SG}\|^2_F}{\|\nabla N_{SG}\|^2_F} = 0,
\end{equation}
where a graph is sparse if $\lvert E\lvert$ is $o(n^2)$.
\end{proposition}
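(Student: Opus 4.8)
The plan is to work directly with the per-node gradients and to exploit the fact that the difference $N'_{SG} - N_{SG} = -\sum_{i,j \mid S_{ij}=1}\log\sigma(-X_i^T X_j)$ sums only over the similar (positive) pairs, which are sparse, whereas $N_{SG}$ sums over the $\Theta(n^2)$ dissimilar pairs. First I would compute the gradients explicitly. Writing $-\log\sigma(-X_i^T X_j) = \log(1 + e^{X_i^T X_j})$ and differentiating, the contribution of node $i$ is a sum of terms of the form $\sigma(X_i^T X_j)\,X_j$, with a common constant factor that will cancel in the ratio. Consequently $\nabla_{X_i}(N'_{SG} - N_{SG})$ is a sum over the neighbors of $i$ (of which there are $\deg(i)$), while $\nabla_{X_i} N_{SG}$ is a sum over its non-neighbors (of which there are $n - \deg(i) - 1$).

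The heart of the argument is a lower bound on the denominator $\|\nabla N_{SG}\|_F^2$, and this is where the constriction hypothesis does the essential work. Because $X_j^T X_k \geq \mathcal{C} > 0$ for every pair of embeddings, the vectors summed inside $\nabla_{X_i} N_{SG}$ all lie in a common positive cone, so a positively weighted sum of them cannot cancel: expanding $\bigl\|\sum_j a_j X_j\bigr\|^2 = \sum_{j,k} a_j a_k\, X_j^T X_k \geq \mathcal{C}\bigl(\sum_j a_j\bigr)^2$ whenever the weights $a_j = \sigma(X_i^T X_j)$ are positive. The bounded-norm hypothesis forces $a_j \geq \sigma(\mathcal{C}) > 0$, so summing over the $n - \deg(i) - 1$ non-neighbors gives $\|\nabla_{X_i} N_{SG}\|^2 = \Omega\bigl((n-\deg(i))^2\bigr)$. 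Summing over $i$ and expanding, $\sum_i (n-\deg(i))^2 = n^3 - 4n\lvert E\rvert + \sum_i \deg(i)^2 \geq n^3 - o(n^3)$ using sparsity ($\lvert E\rvert = o(n^2)$), so the denominator is $\Omega(n^3)$.

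For the numerator I would take the crude, complementary route, since here only an upper bound is needed. Applying the triangle inequality together with $\sigma \leq 1$ and the bounded-norm hypothesis gives $\|\nabla_{X_i}(N'_{SG} - N_{SG})\| = O(\deg(i))$, hence the numerator is $O\bigl(\sum_i \deg(i)^2\bigr)$. Since $\sum_i \deg(i)^2 \leq (\max_i \deg(i))\sum_i \deg(i) \leq n\cdot 2\lvert E\rvert = o(n^3)$ by sparsity, the numerator is $o(n^3)$. Dividing the $o(n^3)$ numerator by the $\Omega(n^3)$ denominator sends the ratio to zero, as claimed.

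The main obstacle is the denominator lower bound: a sum of $\Theta(n)$ bounded vectors could in principle have norm anywhere from zero (perfect cancellation) up to $\Theta(n)$, and without control on cancellation the ratio need not vanish. The constriction condition $\mathcal{C} > 0$ is precisely what rules out cancellation, turning the gradient into a constructive sum whose norm grows linearly in the number of terms; I would flag this as the load-bearing assumption. A secondary technical point is to use the bounded-norm hypothesis consistently so that each $\sigma(X_i^T X_j)$ stays strictly between $0$ and $1$, ensuring neither the per-node lower bound degenerates nor the per-node upper bound blows up.
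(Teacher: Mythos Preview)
Your proposal is correct and follows essentially the same strategy as the paper: upper-bound the numerator using the sparsity of positive pairs and lower-bound the denominator by invoking $\mathcal{C}>0$ to preclude cancellation among the summed vectors. The only technical difference is in the denominator step---the paper projects each $X_j$ onto the direction of $X_i$ and sums the (positive) projection lengths, whereas you expand the Gram form $\bigl\|\sum_j a_j X_j\bigr\|^2=\sum_{j,k}a_j a_k X_j^T X_k\geq \mathcal{C}\bigl(\sum_j a_j\bigr)^2$ directly; your route is slightly cleaner and delivers the sharper $\Omega(n^3)$ scaling in one line.
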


For a single node $i$, performing gradient descent on Eq.~\eqref{eqn:sg-all-to-all} results in the following update:
\begin{equation}\label{eqn:sg-gradient}
    X_{i}^{t+1} = 
    X_i^t - \eta\sum_{i' \in V} \sigma\left(\left(X_i^t\right)^T X_{i'}^t\right) X_{i'}^t .
\end{equation}

The right-hand term in the gradient update repels node $i$ from all other nodes where the repulsion is proportional to the similarity between the node embeddings. In Proposition \ref{prop:sg-regularization}, we show a connection between minimizing $N'_{SG}$ and centering the dimensions at the origin. For intuition, observe that if all pairs of nodes are highly similar, the gradient update in Eq.~\eqref{eqn:sg-gradient} is approximately equal to subtracting the column means scaled by a constant ($2\eta \left(X^T\Vec{1}\right)$). This is equivalent to performing gradient descent on a dimension regularizer that penalizes non-zero dimension means,
\begin{equation} \label{eqn:mean-regularizer}
    R(X) = \| X^T \Vec{1}\|^2_2.
\end{equation}
We formalize the connection between the negative function $N'_{SG}$ and origin-centering in the following proposition:

\begin{proposition} \label{prop:sg-regularization}
    Let $R$ be the dimension regularizer defined in Eq.~\eqref{eqn:mean-regularizer} that penalizes embeddings centered away from the origin and $n \gg d$. Then, as the constriction increases beyond zero, the difference between performing gradient descent on $R$ versus $N'_{SG}$ vanishes.
\end{proposition}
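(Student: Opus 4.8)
The plan is to compare the two gradient fields directly and show that their (normalized) difference is controlled by $\sigma(-\mathcal{C})$, which decays as the constriction grows. First I would differentiate the regularizer. Writing $R(X) = \|X^T\Vec{1}\|_2^2 = \big\|\sum_{i'} X_{i'}\big\|_2^2$, the gradient with respect to row $X_i$ is $\nabla_{X_i} R = 2\sum_{i'} X_{i'} = 2X^T\Vec{1}$, which is identical for every node and points along the current dimension means. For $N'_{SG}$, differentiating over all terms containing $X_i$ gives $\nabla_{X_i} N'_{SG} = 2\sum_{i'}\sigma(X_i^TX_{i'})X_{i'}$, matching the per-node update of Eq.~\eqref{eqn:sg-gradient} up to the overall factor of two absorbed into $\eta$; this is the \emph{same} sum as in $\nabla R$ but with each term reweighted by $\sigma(X_i^TX_{i'})$.

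The crux is then the row-wise difference
\begin{equation*}
\nabla_{X_i} N'_{SG} - \nabla_{X_i} R = 2\sum_{i'}\big(\sigma(X_i^TX_{i'})-1\big)X_{i'} = -2\sum_{i'}\sigma(-X_i^TX_{i'})\,X_{i'},
\end{equation*}
where I used $1-\sigma(u)=\sigma(-u)$. Because $X_i^TX_{i'} \ge \mathcal{C}$ for every pair by Definition~\ref{def:constriction}, each weight obeys $\sigma(-X_i^TX_{i'}) \le \sigma(-\mathcal{C})$, which decays exponentially as $\mathcal{C}$ increases. A triangle inequality then bounds each row of the difference by $2\sigma(-\mathcal{C})\sum_{i'}\|X_{i'}\|_2$, so $\|\nabla N'_{SG}-\nabla R\|_F^2 \le 4n\,\sigma(-\mathcal{C})^2\big(\sum_{i'}\|X_{i'}\|_2\big)^2$. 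This is the easy direction: increasing constriction suppresses exactly the residual that separates the weighted sum from the plain dimension-mean direction.

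To obtain the normalized statement (in the spirit of Proposition~\ref{prop:all-to-all-approx}), I would lower-bound the magnitude of the gradient being approximated, which is where I expect the real work to be. The convenient identity is $R(X)=\|X^T\Vec{1}\|_2^2=\sum_{i,j}X_i^TX_j \ge n^2\mathcal{C}$, so $\|\nabla R\|_F^2 = 4n\,R(X) \ge 4n^3\mathcal{C}$; combined with the numerator bound and Cauchy--Schwarz ($\big(\sum_{i'}\|X_{i'}\|_2\big)^2 \le n\|X\|_F^2$), the normalized difference is $O\!\big(\sigma(-\mathcal{C})\,\|X\|_F/\sqrt{n\mathcal{C}}\big)$. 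Since $\sigma(-\mathcal{C})$ is exponentially small in $\mathcal{C}$ while the prefactor grows only polynomially, the ratio vanishes as the constriction increases. The regime $n\gg d$ guarantees that the dimension-mean direction $X^T\Vec{1}$ is the dominant repulsive signal, so this comparison is the relevant one.

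The main obstacle is controlling the denominator, i.e.\ ruling out cancellation in $\sum_{i'}X_{i'}$ that could make $\|\nabla R\|_F$ small relative to the residual. Positive constriction is precisely the geometric condition that prevents this: $\|\sum_{i'}X_{i'}\|_2^2 = \sum_{i,j}X_i^TX_j \ge n^2\mathcal{C} > 0$ forces the embeddings into a narrow cone (near-alignment), so no cancellation occurs and the plain sum genuinely has magnitude $\Theta(n\sqrt{\mathcal{C}})$. The one assumption I would carry along is that the embedding norms grow only sub-exponentially under the collapse dynamics of Proposition~\ref{prop:guaranteed-oversmoothing} (consistent with a fixed angular spread as magnitudes increase), so that $\|X\|_F/\sqrt{n\mathcal{C}}$ stays bounded and the $\sigma(-\mathcal{C})$ decay dominates.
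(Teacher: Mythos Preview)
Your approach is essentially the paper's: compute both gradients row-wise, write the difference as $-2\sum_{i'}\sigma(-X_i^TX_{i'})X_{i'}$, and bound each weight by $\sigma(-\mathcal{C})$ (the paper uses the equivalent $e^{-\mathcal{C}}$ tail from its Lemma~\ref{lem:sigmoid-vs-x}) to obtain exponential decay of the Frobenius difference under a bounded-norm constant $\beta_{\max}$.

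The one substantive divergence is that you go on to normalize by $\|\nabla R\|_F$, invoking $R(X)\ge n^2\mathcal{C}$ for the denominator and a sub-exponential-growth assumption on $\|X\|_F$ to make the ratio vanish. The paper does not do this: it stops at the absolute bound $\|\tfrac{1}{2}\nabla N'_{SG}-\nabla R\|_F^2 \le (n/e^{\mathcal{C}})^2\beta_{\max}$ and reads the proposition as an unnormalized statement (in contrast to Proposition~\ref{prop:all-to-all-approx}, which is explicitly stated as a ratio). Your additional denominator analysis is correct and actually sharpens the conclusion, but it is more than the paper requires; conversely, the paper's shorter argument leaves the norm-growth issue implicit by treating $\beta_{\max}$ as a fixed constant.
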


We note that our result establishing a connection between the skip-gram loss and origin-centered dimensions is related to the finding in \citet{wang2022understanding} connecting the InfoNCE loss with embeddings uniformly distributed on unit hyperspheres; however, as the SG and InfoNCE losses are distinct, our theoretical contribution in Proposition \ref{prop:sg-regularization} is novel.

\subsubsection{Comparison with Skip-Gram Negative Sampling}\label{sec:reweighting}
Skip-gram negative sampling (SGNS) offers an efficient stochastic approximation to the gradient update in Eq.~\eqref{eqn:sg-gradient}. Furthermore, the SGNS procedure provides a tunable way to bias the gradients---via non-uniform sampling---in a manner that has been seen to empirically improve the utility of the resulting embedding in downstream tasks~\citep{mikolov2013distributed}. Instead of repelling node $i$ from all other $n - 1$ nodes, SGNS repels $i$ from a sample of $k$ nodes where the nodes are sampled according to a distribution $P_\alpha$ over all nodes, optimizing the following objective:
\begin{equation}
    N_{SGNS}(X, S, i) = - k \mathbb{E}_{j' \sim P_\alpha} \left[ \log \sigma\left(-X_i^T X_{j'}\right)\right],
\end{equation}
where the expectation is estimated based on $k$ samples. 

In aggregate, SGNS reduces the repulsion time complexity from $\mathcal{O}\left(n^2\right)$ to $\mathcal{O}\left(kbn\right)$ vector additions per epoch. $b$ is the average number of positive pairs per node; for LINE, $b=m/n$, and for node2vec, $b$ is the product of the context size and random-walk length. With Proposition \ref{prop:sg-regularization}, we reduce the time complexity to $\mathcal{O}(d)$ vector additions per epoch.

As mentioned, SGNS embeddings can be tuned by the choice of the non-uniform sampling distribution, where in graph embedding contexts the distribution $P_\alpha$ is typically sampling nodes proportional to their degree$^\alpha$, with $\alpha = 3/4$. An optimization-based intuition for this choice is that a degree-based non-uniform distribution prioritizes learning the embeddings of high-degree nodes, but we emphasize that the specific choice of $\alpha=3/4$ is typically motivated directly based on improved empirical performance in downstream tasks \cite{mikolov2013distributed}. 

We briefly note that our dimension-mean regularizer is immediately amenable to introducing an analogous tuning opportunity. We can simply replace the regularization in Eq.~\eqref{eqn:mean-regularizer} with 
\begin{equation} \label{eqn:mean-regularizer-weighted}
    R(X;\Vec{p}) = \| X^T \Vec{p}\|^2_2,
\end{equation}
where $\Vec{p}$ is a normalized weight vector that biases the negative update in exact correspondence to the probabilities of each node in $P_\alpha$. In our subsequent simulations, we focus our efforts on the uniform case of $\Vec{p}=\Vec{1}$, i.e., the regularizer in Eq.~\eqref{eqn:mean-regularizer}.
\section{Algorithm Augmentation to Replace SGNS} \label{sec:algs}
We now propose a generic algorithm augmentation framework that directly replaces SGNS with dimension regularization. We instantiate this algorithm augmentation for LINE and node2vec, but note that the framework is applicable to any graph embedding algorithm using SGNS.

Our augmentation modifies existing algorithms using SGNS in two ways. First, the augmentation prioritizes the positive function $P$, that is, preserving similarity when possible. Not only does prioritizing observed edges increase efficiency, but it is also desirable given the fact that real-world graph data frequently have missing edges. We prioritize the observed edges by defaulting to performing gradient descent on $P$.

Second, our augmentation achieves repulsion via gradient descent on $R$, the dimension-mean regularizer introduced in Eq.~\eqref{eqn:mean-regularizer}. As constriction increases from optimizing only $P$, dimension-mean regularization approximates $\nabla N'_{SG}$ per Proposition \ref{prop:sg-regularization}. 

Taken together, the algorithm augmentation framework can be summarized as:
\begin{equation}
\label{eqn:augmentation-framework}
X^{t + 1} = 
    \begin{cases}
        X^t - \eta \nabla P_{SG}\left(X^t\right) & t~\%~n_{\text{negative}} \ne 0,\\
        X^t - \eta \left[\nabla P_{SG}\left(X^t\right) + \lambda \nabla R\left(X^t\right)\right] & t~\%~n_{\text{negative}} = 0,
    \end{cases}
\end{equation}
where $\lambda$ is the regularization hyperparameter, $n_{\text{negative}}$ controls the frequency of performing gradient descent on $R$, and $\eta$ is the learning rate.

Figure \ref{fig:side-by-side} visualizes our algorithm augmentation with a Stochastic Block Model (SBM). We begin by randomly initializing embeddings around the origin and then repeatedly (over 100 epochs) attract the embeddings of similar nodes. The attraction drives the embeddings away from the origin and causes the two blocks to remain entangled. Then, we repel dissimilar pairs via dimension-mean regularization, which pulls the embeddings back to the origin. Once we resume applying positive updates, the embeddings for the two blocks begin to separate, indicating effective repulsion.
\begin{figure*}[ht]
    \centering
    \includegraphics[width = 0.95\linewidth]{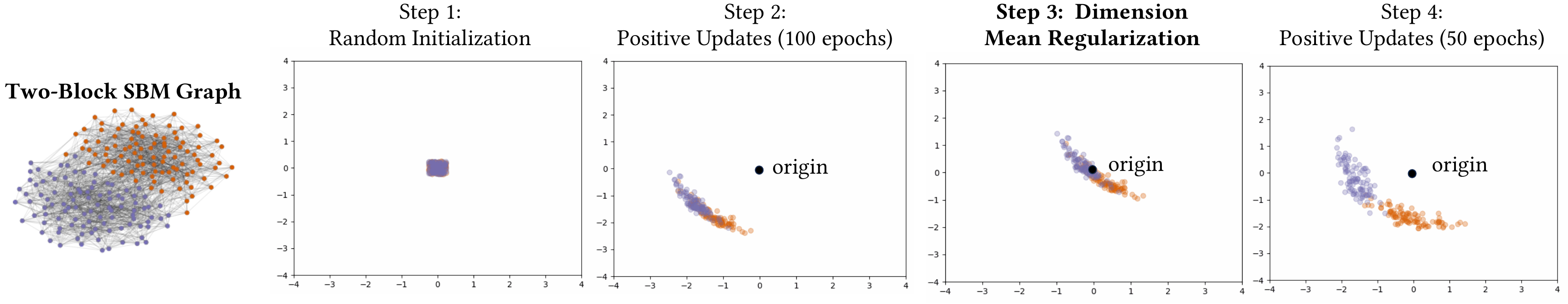}
    \Description[Toy example of our algorithm augmentation]{On the left is a toy two-block SBM graph. The right side of the figure shows the embedding of the SBM graph at four snapshots. The snapshots show: 1) random initialization at the origin 2) approaching collapse away from the origin after positive updates 3) retraction to the origin from dimension-mean regularization 4) expansion from the origin but with the two blocks separated in embedding space.}
    \caption{
        We use a two-block Stochastic Block Model (SBM) example to summarize how we bypass skip-gram negative sampling with dimension regularization. We introduce an algorithm augmentation that prioritizes attracting embeddings of similar nodes together (Step 2), where similarity is the sigmoid of embedding dot products. Eventually, to prevent all pairs of embeddings from becoming similar, the dimension regularization re-centers the embeddings around the origin, increasing node contrast (Step 3). Attraction updates are then again repeatedly applied, but the blocks are more distinguishable following dimension regularization (Step 4). 
    }
    \label{fig:side-by-side}
\end{figure*}

\subsection{Instantiation for LINE and node2vec}

In Algorithm~\ref{alg:aug-framework} we include the pseudo-code for the augmented versions of LINE and node2vec, utilizing the framework in Eq.~\eqref{eqn:augmentation-framework}. The loop labeled ``old negative update'' shows where SGNS would have occurred. When instantiated for LINE and node2vec, we implement our algorithm augmentation with batched gradient descent, where batches are sets of positive pairs. Dimension regularization is applied every $n_{\text{negative}}$ batches, and the regularization is applied to the entire embedding matrix $X$.
\begin{algorithm}
\caption{Augmented LINE and node2vec}
\label{alg:aug-framework}
\begin{algorithmic}
\State {\bfseries Input:} $G, n, d, p, q, \text{num\_epochs}, \text{batch\_size}, \lambda, \eta, n_{\text{negative}}$
\State $t \gets 0$
\State $X^0 \gets \texttt{random\_initialization}(n, d)$
\State $\text{pairs} \gets \texttt{run\_random\_walks}(G, p, q)$ 
\For{$\{1, \dots, \text{num\_epochs}\}$}
    \State $\text{pair\_batches} \gets \texttt{get\_batches}(\text{pairs}, \text{batch\_size})$
    \For{batch \texttt{in} pair\_batches}
        \State $X^{t + 1} \gets X^t$
        \For{$i,j \in$ batch}
            \State $X^{t+1}_i \mathrel{+}=~ \eta \sigma\left(-\langle X^t_i, X^t_j\rangle \right)X^t_j$ 
            \Comment{positive update}
            \State $X^{t+1}_j \mathrel{+}=~ \eta \sigma\left(-\langle X^t_i, X^t_j\rangle \right)X^t_i$
                \For{$j' \in \texttt{sample}\left(P_\alpha, k\right)$}
                    \Comment{old negative update}
                    \State $X^{t+1}_i \mathrel{-}=~ \eta \sigma\left(-\langle X^t_i, X^t_{j'}\rangle \right)X^t_{j'}$ 
                    \State $X^{t+1}_{j'} \mathrel{-}=~ \eta \sigma\left(-\langle X^t_i, X^t_{j'}\rangle \right)X^t_i$ 
                \EndFor
        \EndFor
        \State $t \gets t + 1$
    \EndFor
    \If{$t ~~ \% ~~ n_{\text{negative}} == 0$} 
    \Comment{new negative update}
        \State $X^{t+1} \mathrel{-}=~ \frac{\lambda}{n}\mathbf{1}X^t$
    \EndIf
\EndFor
\end{algorithmic}
\end{algorithm}

LINE and node2vec differ in their implementation of the random-walk generation function. 
For LINE, the function simply returns the edge set $E$. For node2vec, the function returns the set of all node and neighbor pairs ($\left\{i, j | S_{ij} = 1 \right\}$) where neighbors of $i$ are nodes that co-occur in a biased-random-walk context window. The parameters $p, q$ control the bias of the random walk \cite{grover2016node2vec}, interpolating between ``outward'' DFS-like walks and ``inward'' BFS-like walks. The default values for $p$ and $q$ are $1$; smaller values of $p$ increase the probability of backtracking to the previous node in the random walk and smaller values of $q$ encourage outward DFS-like walks. In Section \ref{sec:methodology}, we discuss our hyperparameter optimization methodology for $p$ and $q$.
\section{Experiments} \label{sec:eval} 
To assess the efficacy of our dimension-regularization augmentation framework for capturing topological structure, we evaluate the link-prediction performance of the node2vec and LINE instantiations. Our results show that replacing SGNS with dimension-mean regularization reduces both training time and GPU memory usage while preserving link-prediction performance. Moreover, for real-world networks, we show that applying no repulsion at all, a special case of our framework, nearly always outperforms LINE and reduces training time by $71.7\%$ on average. We correlate the success of attraction-only models with \emph{local density}, which is common in real-world networks. However, more generally, in networks with \emph{local sparsity}, repulsion is needed, and dimension regularization provides an efficient alternative to SGNS.

\subsection{Methodology}\label{sec:methodology}
Below, we summarize the data and experimental methodology used for our empirical evaluation. All of the experiments were executed on a machine with a single NVIDIA V100 32 GB GPU. Our code is available at: \href{https://doi.org/10.5281/zenodo.15557847}{https://doi.org/10.5281/zenodo.15557847}. More details are provided in Appendix \ref{sec:appendix-methodology}.

\paragraph{Data.} We utilize the popular link-prediction benchmarks of Cora, CiteSeer, and PubMed~\citep{yang2016revisiting} as well as four OGB datasets to demonstrate the scalability of our augmentation: ogbl-collab, ogbl-ppa, ogbl-citation2, ogbl-vessel ~\citep{hu2020ogb}. We uniformly-randomly split the edge sets for Cora, CiteSeer, and PubMed into training/validation/test sets following standard 70/10/20 splits. For the OGB datasets, we use the provided edge splits. Table \ref{tab:datasets} summarizes the structural properties of the datasets, showing that the datasets span two types of densities: global density, measured by edge density, and local density, measured by the average clustering coefficient (``CC'') \cite{watts1998collective}.

\begin{table}[h]
    \centering
    \caption{Summary of datasets with structural properties. ``CC'' refers to the average clustering coefficient.}
    \begin{tabular}{lrrrl}
        \toprule
        \textbf{Dataset} & \textbf{Nodes} & \textbf{Edges} & \textbf{CC} & \textbf{Edge Density} \\
        \midrule
        \textbf{Cora} & $2.7 \times 10^{3}$ & $1.2 \times 10^{4}$ & 0.24 & $1.6 \times 10^{-3}$ \\
        \textbf{CiteSeer} & $3.3 \times 10^{3}$ & $1.0 \times 10^{4}$ & 0.14 & $9.0 \times 10^{-4}$ \\
        \textbf{PubMed} & $2.0 \times 10^{4}$ & $9.8 \times 10^{4}$ & 0.06 & $2.5 \times 10^{-4}$ \\
        \textbf{ogbl-collab} & $2.4 \times 10^{5}$ & $1.3 \times 10^{6}$ & 0.73 & $2.4 \times 10^{-5}$ \\
        \textbf{ogbl-ppa} & $5.8 \times 10^{5}$ & $3.6 \times 10^{7}$ & 0.22 & $1.1 \times 10^{-4}$ \\
        \textbf{ogbl-citation2} & $2.9 \times 10^{6}$ & $3.1 \times 10^{7}$ & 0.09 & $3.6 \times 10^{-6}$ \\
        \textbf{ogbl-vessel} & $3.5 \times 10^{6}$ & $5.9 \times 10^{6}$ & 0.01 & $4.7 \times 10^{-7}$ \\
        \bottomrule
    \end{tabular}
    \label{tab:datasets}
\end{table}

\paragraph{Models.} For each of node2vec and LINE, we instantiate three model variants: variant \texttt{I} is vanilla node2vec/LINE; variant \texttt{II}$^0$ is a special instance of our framework in which no repulsion is applied at all ($n_{\text{negative}}$ is set to be larger than the number of batches); and variant \texttt{II} is our augmented model in which dimension regularization is applied at least once per epoch. The variant \texttt{II}$^0$ is trained for only 2 epochs to prevent collapse. For completeness, we also instantiate variants of the vanilla algorithms and our augmentation in which node repulsion is proportional to $\text{degree}^\alpha$ as discussed in Section \ref{sec:reweighting}; these results are comparable to the results for \texttt{I} and \texttt{II} and included in Appendix \ref{sec:supplemental-eval}. In all cases, we set the embedding dimension $d=128$.

For link prediction, the embeddings for each model are used to train a multi-layer perceptron (MLP) edge classifier. A MLP is trained for each combination of dataset and model variant (\texttt{I}, \texttt{II}$^0$, \texttt{II}). The input to MLP is the concatenation of two node embeddings and the output is an edge prediction score $\in [0, 1]$. The time and memory values in the results correspond only to the resources used to train the embeddings and exclude the MLP training.

\paragraph{Hyperparameter optimization.} For each dataset and SG model, we optimize the following hyperparameters: the learning rate $\eta$, the node2vec random-walk parameters $p$ and $q$, as well as $n_{\text{negative}}$ and $\lambda$ from our augmentation. Details about the optimization process and the final values are included in Appendix \ref{sec:appendix-methodology}.

\paragraph{Metrics.} We report AUC-ROC, MRR, and Hits@$k$ evaluation metrics for link prediction. All metrics are evaluated at the node level and then averaged. To account for class imbalance when measuring AUC-ROC, we sample negative edges such that the number of positive and negative test edges is equal.

\subsection{Results}
\subsubsection{Dimension regularization reduces training time and memory while preserving link-prediction performance.} Tables \ref{table:line-metrics} and \ref{table:node2vec-metrics} show the training time, GPU memory usage, and AUC-ROC for vanilla and augmented node2vec and LINE. The training time is aggregated over multiple epochs, where the number of epochs is fixed for each graph and specified in Appendix \ref{sec:appendix-methodology}. The reported GPU memory is the maximum GPU memory allocated by PyTorch over the course of embedding training. The $\Delta (\%)$ columns under time and memory report the relative difference between $\texttt{II}$ and $\texttt{I}$, which lower-bounds the efficiency improvement given that \texttt{II}$^0$, a specific instance of our framework, is the most efficient variant.

For LINE, replacing SGNS with dimension regularization decreases training time by $11.4\%$, and link-prediction performance increases for all graphs, with AUC-ROC increasing by $0.05$ on average. Of note, LINE does not learn meaningful embeddings for the ogbl-vessel graph, likely due to the graph's low edge density. For node2vec, training time reduces by $16.5\%$, on average. The AUC-ROC decreases for certain graphs, but, on average, the AUC-ROC changes by less than $0.05$. Meanwhile, GPU memory usage decreases by $30.7\%$ on average. In cases where GPU memory is constrained, our augmentation can enable node2vec training for graphs that were previously infeasible to train.

Our augmentation does not reduce memory usage for LINE because we obtain the best performance across all models when using small batch sizes; with small batches, even the vanilla algorithm has low GPU memory usage, and negative sampling contributes marginally to memory usage. We include the performance according to ranking metrics in Appendix \ref{sec:supplemental-eval}, which shows similar patterns as the AUC-ROC results.

\begin{table*}[ht]
 \caption{For LINE, our dimension-regularization augmentation framework decreases training time by $11.4\%$ on average. The differences in link prediction performance (as measured by AUC-ROC) are negligible ($\pm 0.05$ on average). We use small batch sizes for LINE; thus, memory use is not affected.
The column \texttt{I} refers to vanilla LINE. The column \texttt{II} refers to augmented LINE where dimension regularization is applied at least once per epoch. The column \texttt{II}$^0$ refers to a special case of our framework where no repulsion is applied. For training time and memory, the $\Delta$ columns report the relative difference between \texttt{II} and \texttt{I}, which lower-bounds the efficiency gain of our framework as \texttt{II}$^0$ is the most efficient.}
\centering
    \begin{tabular}{l|cccc|cccc|ccc}
    \hline
    \multirow{2}{*}{\textbf{Dataset}} & \multicolumn{4}{c|}{\textbf{Time (min)}} & \multicolumn{4}{c|}{\textbf{Memory (GB)}} & \multicolumn{3}{c}{\textbf{AUC-ROC}} \\
     & \texttt{I} & \texttt{II}$^0$ & \texttt{II} & \textbf{$\Delta (\%)$} & \texttt{I} & \texttt{II}$^0$ & \texttt{II} & \textbf{$\Delta (\%)$} & \texttt{I} & \texttt{II}$^0$ & \texttt{II}\\
    \hline
\textbf{CiteSeer} & 0.05 & <0.01 & 0.04 & -20.00\% & 0.02 & 0.02 & 0.02 & 0.00\% & 0.6 & 0.65 & 0.67\\
\textbf{Cora} & 0.06 & 0.01 & 0.05 & -16.67\% & 0.02 & 0.02 & 0.02 & 0.00\% & 0.59 & 0.67 & 0.68\\
\textbf{PubMed} & 0.55 & 0.05 & 0.45 & -18.18\% & 0.06 & 0.06 & 0.06 & 0.00\% & 0.7 & 0.72 & 0.75\\
\textbf{ogbl-collab} & 0.77 & 0.25 & 0.65 & -15.58\% & 0.64 & 0.64 & 0.64 & 0.00\% & 0.72 & 0.78 & 0.8\\
\textbf{ogbl-ppa} & 16.23 & 5.12 & 15.43 & -4.93\% & 2.07 & 2.07 & 2.07 & 0.00\% & 0.9 & 0.91 & 0.78\\
\textbf{ogbl-citation2} & 48.51 & 25.19 & 47.4 & -2.29\% & 8.02 & 8.02 & 8.02 & 0.00\% & 0.61 & 0.65 & 0.62\\
\textbf{ogbl-vessel} & 9.39 & 4.9 & 9.2 & -2.02\% & 9.19 & 9.19 & 9.19 & 0.00\% & 0.5 & 0.5 & 0.5\\
    \hline
    \end{tabular}
\label{table:line-metrics}
\end{table*}

\begin{table*}[ht]
\caption{For node2vec, our augmentation framework reduces training time by $16.5\%$ and memory by $30.7\%$ on average. The differences in link prediction performance (as measured by AUC-ROC) are negligible ($\pm 0.05$ on average). The columns are defined analogously as in Table \ref{table:line-metrics}.}
\centering
    \begin{tabular}{l|cccc|cccc|ccc}
    \hline
    \multirow{2}{*}{\textbf{Dataset}} & \multicolumn{4}{c|}{\textbf{Time (min)}} & \multicolumn{4}{c|}{\textbf{Memory (GB)}} & \multicolumn{3}{c}{\textbf{AUC-ROC}} \\
     & \texttt{I} & \texttt{II}$^0$ & \texttt{II} & \textbf{$\Delta (\%)$} & \texttt{I} & \texttt{II}$^0$ & \texttt{II} & \textbf{$\Delta (\%)$} & \texttt{I} & \texttt{II}$^0$ & \texttt{II}\\
    \hline
\textbf{CiteSeer}  & 0.53 & 0.03 & 0.42 & -20.75\% & 3.12 & 2.09 & 2.09 & -33.01\% & 0.76 & 0.78 & 0.77\\
\textbf{Cora}  & 0.44 & 0.03 & 0.36 & -18.18\% & 3.12 & 2.08 & 2.08 & -33.33\% & 0.87 & 0.66 & 0.8\\
\textbf{PubMed}  & 2.61 & 0.19 & 2.00 & -23.37\% & 3.16 & 2.12 & 2.12 & -32.91\% & 0.91 & 0.76 & 0.81\\
\textbf{ogbl-collab}  & 4.31 & 1.48 & 3.88 & -9.98\% & 13.25 & 9.00 & 9.00 & -32.08\% & 0.94 & 0.93 & 0.96\\
\textbf{ogbl-ppa}  & 24.72 & 9.23 & 23.43 & -5.22\% & 27.24 & 18.74 & 18.74 & -31.20\% & 1.00 & 0.81 & 0.97\\
\textbf{ogbl-citation2}  & 35.75 & 20.5 & 28.07 & -21.48\% & 31.98 & 23.49 & 23.49 & -26.55\% & 0.98 & 0.81 & 0.95\\
\textbf{ogbl-vessel}  & 51.69 & 27.93 & 42.99 & -16.83\% & 32.84 & 24.35 & 24.35 & -25.85\% & 0.96 & 0.83 & 0.87\\
    \hline
    \end{tabular}
\label{table:node2vec-metrics}
\end{table*}

\subsubsection{Removing repulsion altogether reduces training time by over $70\%$ and improves LINE's performance, especially for locally dense graphs.} Interestingly, we observe that the special case of our algorithm augmentation that removes repulsion altogether (\texttt{II}$^0$) outperforms vanilla LINE for all seven graphs and significantly reduces training time. The top of Figure \ref{fig:clustering} shows that removing repulsion reduces training time by at least $40\%$. Meanwhile, the bottom of the figure correlates the gain in LINE's performance with local edge density, as measured by the average clustering coefficient. The trend is most noticeable in the case of ogbl-collab, which has the highest clustering coefficient and reflects increases, relative to vanilla LINE, in all three metrics when repulsion is removed. 

Intuitively, removing repulsion is effective in globally sparse but locally dense graphs, which characterizes many real-world networks~\citep{newman2003structure}, because the global sparsity hinders dimensional collapse while the local density ensures that node attraction brings nodes near hidden neighbors. While we would theoretically expect the embeddings to collapse even in sparse networks, in our experiments, we use the Adam optimizer, which decays the learning rate, and we train for only $2$ epochs in the case of \texttt{II}$^0$. On the other hand, removing repulsion is less effective for node2vec because the random walks effectively increase global density--and thus, the likelihood of collapse--by creating positive pairs among not just connected nodes but also nodes within the same context window.

\begin{figure}[h]
    \centering
    \begin{subfigure}[b]{0.8\columnwidth}
    \includegraphics[width=\columnwidth]{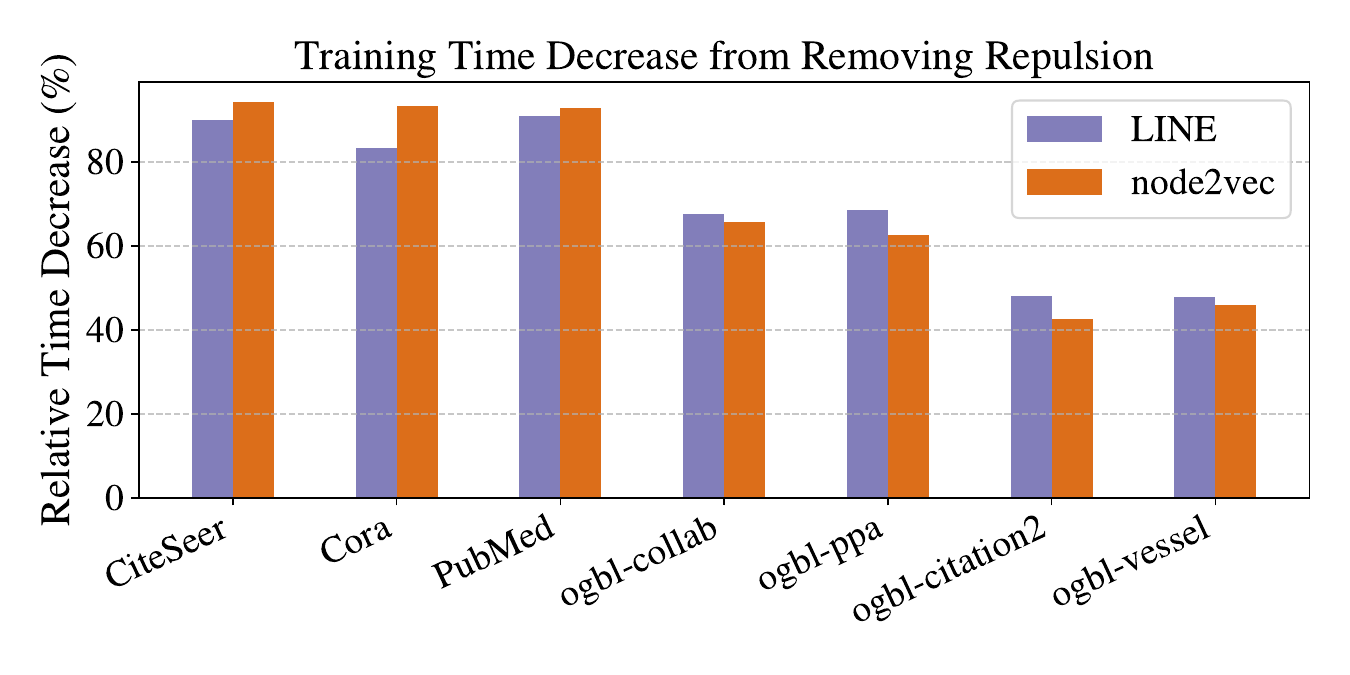}
    \end{subfigure}
    \begin{subfigure}[b]{\columnwidth}
    \includegraphics[width=\columnwidth]{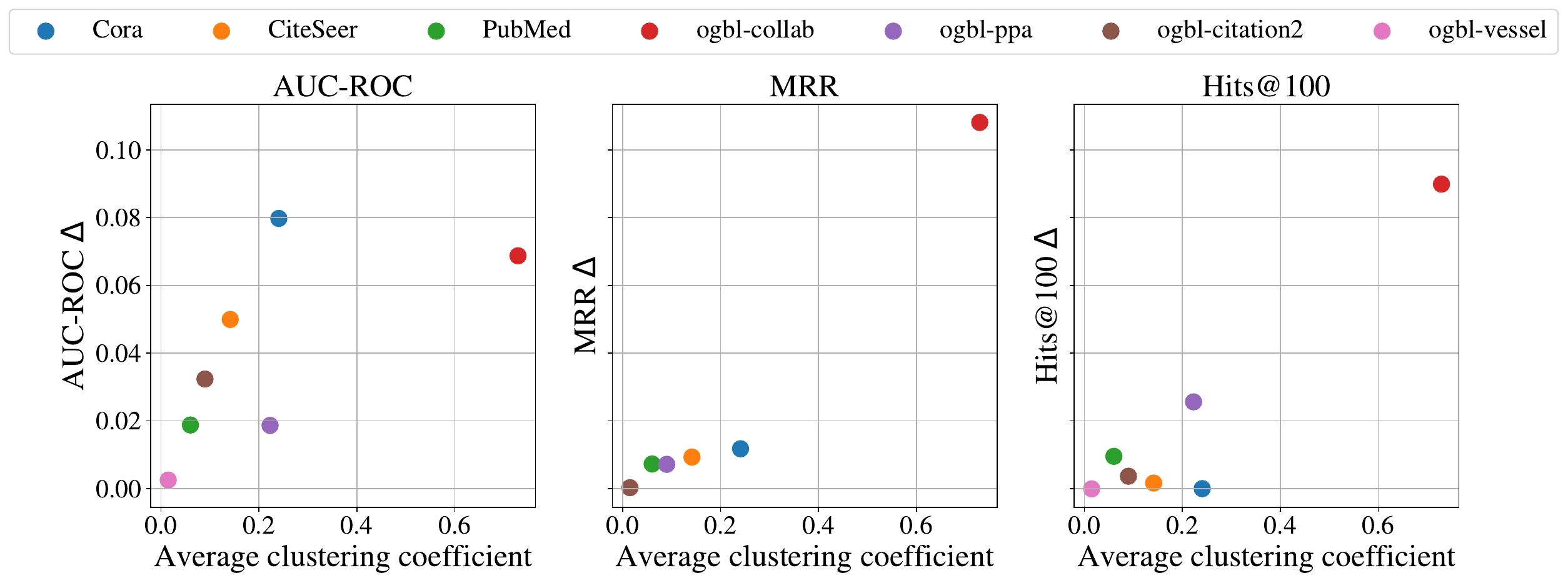}
    \end{subfigure}
    \Description[Performance of removing repulsion completely]{There are two plots. The top is a grouped bar chart with graph dataset names on the x-axis. For each dataset, there are two bars: one for the time decrease when removing repulsion from LINE and another for node2vec. The bottom plot contains three scatter plots, one for each of: AUC-ROC, MRR, and Hits@100. In each scatter plot, a point is a dataset, the x-axis is the average clustering coefficient, and the y-axis is the change in the metric value when repulsion is removed for LINE.}
    \caption{We find that in practice, removing repulsion altogether significantly reduces training time, as shown in the top figure, and can even improve performance. Specifically, removing repulsion performs especially well for real-world graphs that are globally sparse but locally dense (high clustering coefficient). In the bottom figure, we show the change in performance when repulsion is removed relative to vanilla LINE. For almost all graphs, AUC-ROC, MRR, and Hits@$k$ all increase; the increase is most prominent in the case of ogbl-collab, which has high local density.}
    \label{fig:clustering}
\end{figure}

\subsubsection{More generally, repulsion is needed, and dimension regularization provides a scalable solution.} To generalize beyond the seven real-world graphs, we examine the performance of our augmentation on two-block SBM graphs in which we toggle local density. Figure \ref{fig:sbm} shows that when the within-block edge probability is much greater than the between-block probability, all variants perform well. However, as local density decreases and the boundary between the two blocks erodes, repulsion is needed as indicated by the gap between \texttt{I} and \texttt{II}$^0$; the figure shows that dimension regularization (\texttt{II}) is an effective repulsion mechanism. In LINE, dimension regularization outperforms all variants, and in node2vec, dimension regularization interpolates between the vanilla and repulsion-less variants. For LINE, the performance dip for \texttt{II}$^0$ indicates that performance may be unstable when repulsion is removed, likely due to the possibility of embedding collapse.

\begin{figure}[h]
    \centering
    \includegraphics[width=\columnwidth]{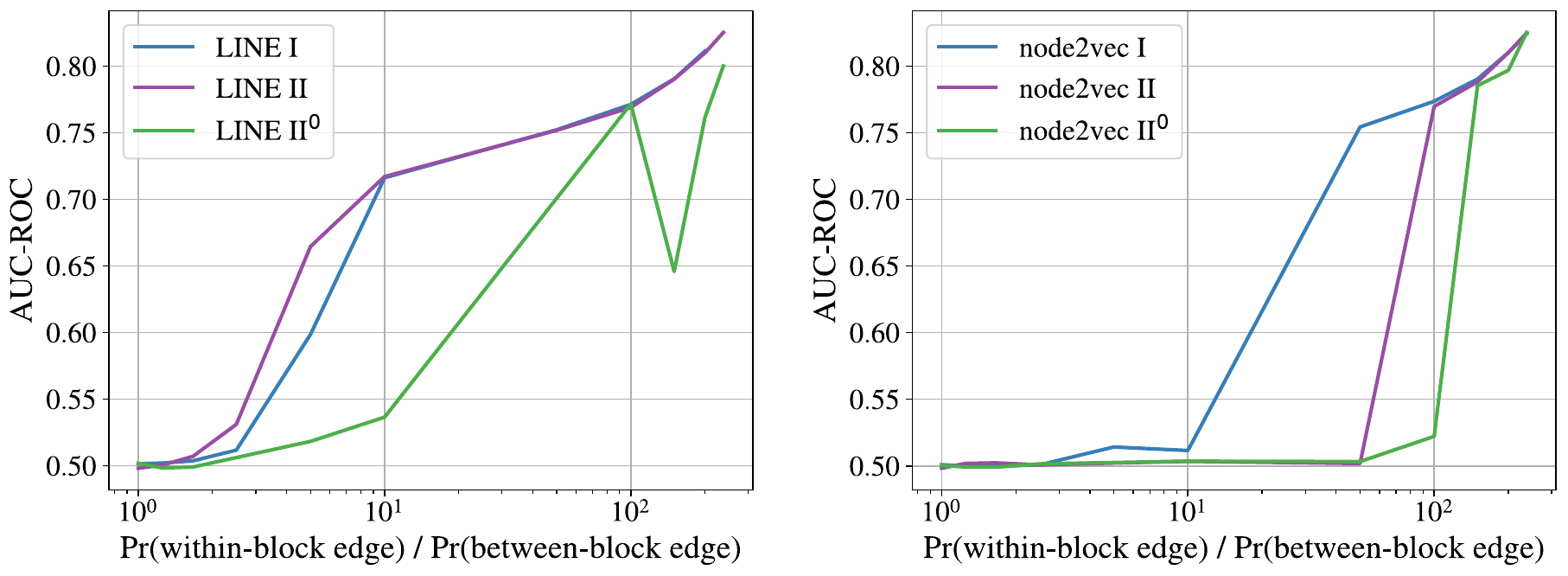}
    \Description[Removing repulsion for SBMs]{The figure consists of two line charts. The first line chart shows the performance of each variant of LINE (AUC-ROC on the y-axis) as a function of the within-block / between-block edge-probability ratio. There is a line for each of the three variants of LINE. The right chart is the same, but for node2vec.}
    \caption{We show that as we decrease local density in a two-block SBM, the need for repulsion increases and dimension regularization provides an effective repulsion mechanism. In both figures, lower values on the x-axis imply fewer edges within blocks (lower local density) and more edges between blocks. When local density is high, the blocks are easily distinguishable and all variants perform well, but as local density decreases, repulsion is needed to separate the blocks and the performance of the repulsion-less special case (\texttt{II}$^0$) erodes. Dimension regularization (\texttt{II}) performs the best in LINE and interpolates between the vanilla and repulsion-less variants in node2vec.}
    \label{fig:sbm}
\end{figure}

\section{Discussion}\label{sec:discussion}
In this section, we discuss the broader applicability of our theoretical result as well as the limitations of our dimension-regularization meta-algorithm.

\paragraph{Broader Applicability of Theoretical Result}
While we focus on the skip-gram loss in this work, our theoretical contribution connecting skip-gram repulsion and dimension-mean regularization, as detailed in Section \ref{sec:sg-norm}, is applicable beyond skip-gram.
First, observe that the skip-gram loss we introduce in Eq.~\eqref{eqn:sg-loss} is a generic cross-entropy logistic pair-classifier loss. Though we focus on the setting where positive and negative pairs are defined via the skip-gram model, dimension-mean regularization can be efficiently used for repulsion in Eq.~\eqref{eqn:sg-loss} regardless of how positive and negative pairs are defined, provided the number of positive pairs is $o(n^2)$, where $n$ is the vocabulary size. 
Second, dimension-mean regularization is also an efficiency bypass for certain formulations of Noise Contrastive Estimation (NCE)~\cite{gutmann2012noise}. Past works, such as the similarity-based graph embedding algorithm VERSE~\cite{tsitsulin2018verse}, have reformulated NCE in terms of negative sampling, using the exact SGNS loss in Eq.~\eqref{eqn:sg-negative}; for these negative-sampling-based formulations of NCE, the dimension-mean regularizer is an efficient bypass of negative sampling, again provided the number of positive samples is $o(n^2)$.
These two applications of the dimension-mean regularizer as an efficient form of repulsion for cross-entropy and NCE fall directly from our theoretical analysis in Section \ref{sec:sg-norm}.

\paragraph{Meta-algorithm Limitations} While the results in Section \ref{sec:eval} demonstrate that our augmentation framework reduces training time and memory usage while preserving link-prediction performance, we note two limitations. First, our dimension-regularization meta-algorithm introduces two hyperparameters $\lambda$ and $n_{\text{negative}}$, which require tuning. Second, while link-prediction performance is preserved overall, there are graphs, such as Cora, for which performance decreased relative to node2vec. That said, these graphs still benefit from significant decreases in training time and memory.
\section{Conclusion}
We provide a new perspective on dissimilarity preservation in graph representation learning and show that dissimilarity preservation can be achieved via dimension regularization. Our main theoretical finding shows that when node repulsion is most needed and embedding dot products are all increasing, the difference between the original skip-gram dissimilarity loss and the dimension-mean regularizer vanishes. Combined with the efficiency of dimension operations over node repulsions, dimension regularization bypasses the need for SGNS. 
We then introduce a generic algorithm augmentation that prioritizes positive updates. When node repulsion is needed and collapse approaches, the augmentation utilizes dimension regularization instead of SGNS.
Our experimental results show that replacing SGNS with dimension regularization in LINE and node2vec preserves the link-prediction performance of the original algorithms while reducing runtime by up to $23.4$\% and memory by up to $33.3\%$ for OGB benchmark datasets. Furthermore, for LINE, removing dissimilarity preservation altogether, a special case of our augmentation framework, reduces training time by $70.9\%$ on average while also increasing link prediction performance--especially for graphs that are globally sparse but locally dense. However, for graphs with low local density, repulsion is needed, and dimension regularization is an efficient and effective approach.

\bibliographystyle{ACM-Reference-Format}
\bibliography{references}

\clearpage
\appendix
\appendix

\section{Proofs} \label{sec:proofs}

\subsection{Proof for Proposition \ref{prop:ase-regularization}}

\begin{proof}
    Recall that the Frobenius norm of a matrix is equivalent to the trace of the corresponding Gram matrix:
    \begin{align}
        N_{ASE}(X, S) &= \| XX^T \|_F^2\\
        &= \text{Tr}\left(XX^T \left(XX^T\right)^T\right)\\
        &= \text{Tr}\left(XX^TXX^T\right)\\ \label{eqn:trace-cycle}
        &= \text{Tr}\left(X^TXX^TX\right)\\ 
        &= \text{Tr}\left(\left(X^TX\right)\left(X^TX\right)^T\right)\\
        &= \| X^TX \|_F^2,
    \end{align}
\end{proof}
Eq.~\eqref{eqn:trace-cycle} follows from the cyclic property of the trace.

\subsection{Proof for Proposition \ref{prop:guaranteed-oversmoothing}}
\begin{proof}
From gradient descent, we know that $X_i^TX_j$ increases toward infinity for all $i, j \in E$; however, to show that $X_i^TX_j$ increases for \textit{all} $i, j$, we show that the cosine similarity for all pairs of embeddings approaches $1$. We characterize the embedding dynamics in two phases: in the first phase (alignment), the embeddings are initialized near the origin and then converge in direction; then, in the second phase (asymptotic), the embeddings asymptotically move away from the origin while maintaining alignment.

\emph{Phase 1: alignment.} The gradient update rule for $P_{SG}$ is:
\begin{equation} \label{eqn:positive-update}
    X_i^{t+1} = X_i^t + \eta \sum_{j \in N(i)} \sigma\left(-(X_i^t)^TX_j^t\right) X_j^t
\end{equation}
Because the embeddings are initialized sufficiently small, the sigmoid function can be approximated linearly via a first-order Taylor expansion:
\begin{equation}
    \sigma(z) \approx \frac{1}{2} + \frac{1}{4}z
\end{equation}
In this case, the update rule becomes:
\begin{align}
    X_i^{t+1} &\approx X_i^t + \frac{\eta}{2} \sum_{j \in N(i)} \left(1 - \frac{1}{2}\left((X_i^t)^TX_j^t\right)\right) X_j^t
\end{align}
The above gradient is equivalent to performing gradient descent on:
\begin{equation} \label{eqn:entry-wise-eigendecomp}
    P'_{SG} = \|\mathbf{1}_{S > 0}  \odot \left(\mathbf{1} - \frac{1}{2}XX^T\right)\|_F^2
\end{equation}
From \citet{gunasekar2017implicit}, gradient descent for matrix completion is implicitly regularized to yield the minimum nuclear norm (lowest rank) stationary point. In the case where $G$ is connected, minimizing the nuclear norm implies that gradient descent in Eq.~\eqref{eqn:entry-wise-eigendecomp} causes $XX^T$ to approach $2\mathbf{1}$, and thus the dot products between all pairs of embeddings increase and $\mathcal{C} \to 2$. 

\emph{Phase 2: Asymptotic.} To complete the proof we show that once $\mathcal{C} > 0$, the constriction monotonically increases with each gradient-descent update to $P_{SG}$. For any pair of embeddings $X_i^t$ and $X_j^t$, the dot product after a single gradient descent epoch is:
\begin{align}\label{eqn:positive-only-update}
    \begin{split}
    \left<X_i^{t+1}, X_j^{t+1}\right> = &\left(X_i^t + \eta \sum_{k \in N(i)} \sigma\left(-(X_i^t)^TX_k^t\right) X_k^t\right)^T\\
    &\left(X_j^t + \eta \sum_{k \in N(j)} \sigma\left(-(X_j^t)^TX_k^t\right) X_k^t\right)
    \end{split}
\end{align}
If $\mathcal{C} > 0$ at $t$, then all of the dot-product terms from distributing the right-hand side of Eq.~\eqref{eqn:positive-only-update} are positive, and we have $\left<X_i^{t+1}, X_j^{t+1}\right> = \left<X_i^{t}, X_j^{t}\right> + \delta$, where $\delta > 0$. Thus, the dot product between any pair of embeddings strictly increases from step $t$ to $t+1$, and therefore the constriction monotonically increases if $\mathcal{C} > 0$ at step $t$.
\end{proof}

\subsubsection{Supplemental Figures}\label{sec:oversmoothing-validation}
\begin{figure*}
    \centering
    \includegraphics[width = 0.9\linewidth]{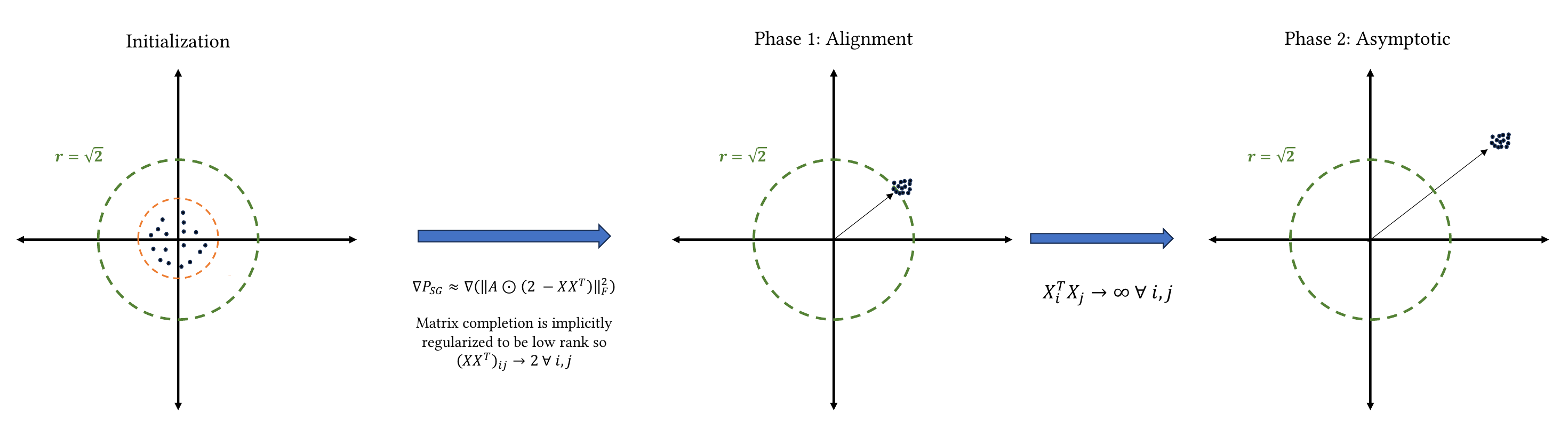}
    \Description[Illustration of proof for Proposition \ref{prop:guaranteed-oversmoothing}]{The figure shows the geometry of graph embeddings as characterized in the proof. The embeddings are shown for three stages: initialization, alignment, and asymptotic. At initialization, the embeddings are shown as a cluster of black points near the origin of an x-y axis. At alignment, the points have clustered at a point on the circumference of a circle centered at the origin with radius equal to the square-root of 2. At the asymptotic stage, the points are clustered at a point even further from the origin.}
    \caption{High-level overview of proof for Proposition \ref{prop:guaranteed-oversmoothing}, which guarantees embedding collapse when only attraction updates are applied. In the beginning, the embeddings are initialized near the origin. Then, in Phase 1, the attraction update rule is approximately gradient descent for matrix completion; given that the latter is implicitly regularized to yield low-rank solutions the embeddings converge in direction. In the second phase, the embeddings asymptotically distance away from the origin in the same direction.}
    \label{fig:oversmoothing-summary}
\end{figure*}

\begin{figure}
    \centering
    \includegraphics[width = 0.8\linewidth]{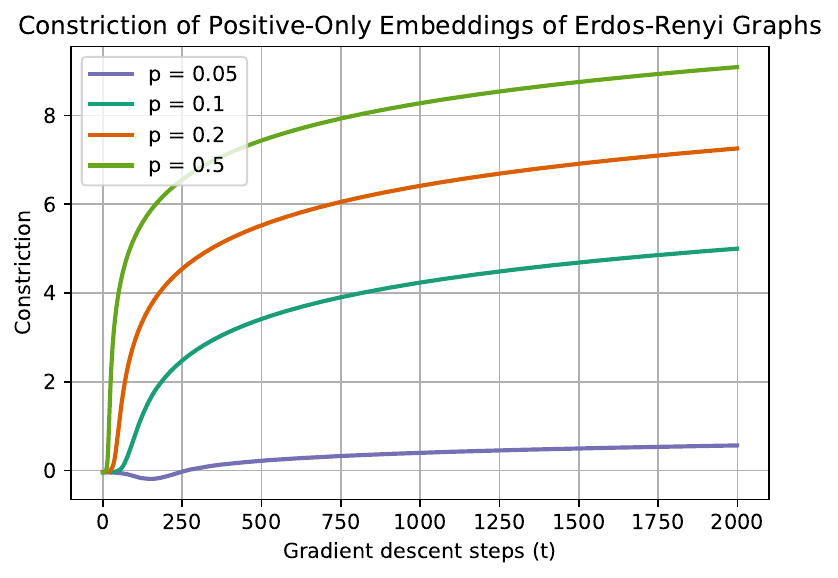}
    \Description[Validation of Proposition \ref{prop:guaranteed-oversmoothing}]{A line plot where the x-axis is the gradient descent step and the y-axis is the Constriction. Each line represents a different Erdos-Renyi graph density. For each density p (0.5, 0.2, 0.1, 0.05), the line starts at the origin and then eventually becomes monotonically increasing with the step count. The curves where p is 0.5, 0.2, or 0.1 increase sharply during the initial gradient descent steps before a more gradual increase. For the smallest p (0.05), the curve initially dips before monotonically increasing above zero.}
    \caption{To empirically validate Proposition \ref{prop:guaranteed-oversmoothing}, we instantiate Erd\"{o}s-R\'{e}nyi networks ($n = 100$) and learn embeddings by only applying the attraction update. The figure shows that for various graph densities, the constriction eventually becomes monotonically increasing.}
    \label{fig:oversmoothing-validation}
\end{figure}
Figure \ref{fig:oversmoothing-summary} provides a high-level summary of the proof for Proposition \ref{prop:guaranteed-oversmoothing}. In the beginning, the embeddings are initialized with norm $\leq b \leq \sqrt{2}$. Then, the embeddings converge in direction given that near the origin, the gradient of $P_{SG}$ is approximately the gradient of a matrix completion problem. Further, gradient descent for matrix completion near the origin is implicitly regularized to yield the lowest rank solution, hence a convergence in the embedding direction. Thereafter, once the dot products between all pairs of nodes are positive in the ``Asymptotic'' phase, $\mathcal{C}$ becomes monotonically increasing.  

We also empirically validate Proposition \ref{prop:guaranteed-oversmoothing} in Figure \ref{fig:oversmoothing-validation}. We initialize Erd\"{o}s-R\'{e}nyi graphs ($n = 100$) of various densities, randomly initialize the embeddings around the origin, and then apply the positive-only update rule in Eq.~\eqref{eqn:positive-update}. Figure \ref{fig:oversmoothing-validation} shows that even when the edge density is $0.05$, the constriction eventually becomes monotonically increasing.

\subsection{Proof for Proposition \ref{prop:all-to-all-approx}}

\begin{proof}
Let us define the matrix of embedding similarities as $K = \sigma\left(XX^T\right)$. Then, the gradient of $N'_{SG}$ is: 
\begin{equation}
    \nabla N'_{SG} =  2KX
\end{equation}
If $\mathbf{1}_{S==0}$ is the indicator matrix where entry $i,j$ is one if $S_{ij} = 0$, then the gradient of $\nabla N_{SG}$ is:
\begin{equation}
    \nabla N_{SG} = \left(\mathbf{1}_{S==0} \odot K\right) X
\end{equation}

The numerator in the proposition can be upper bounded as: 
\begin{align}
    \| \nabla N'_{SG} - \nabla N_{SG}\|^2_F  &= \sum_{i=1}^n \Big\|\sum_{j' \in \{j | S_{ij} > 0\}} \sigma\left(X_i^TX_{j'}\right) X_{j'}\Big\|^2_2\\ 
    &\leq \sum_{i=1}^n \lvert \{j | S_{ij} > 0\} \lvert \beta_{\text{max}}\\
    &\leq m \beta_{\text{max}}
\end{align}
Where in the above, $\beta_{\text{max}}$ is a constant and the upper bound on embedding norm squared, and $m$ is the number of non-zero entries in $S$.

Now we lower bound the denominator. The gradient can be expanded as:
\begin{equation}
    \| \nabla N_{SG}\|^2_F = \sum_{i=1}^n \Big\|\sum_j^n \mathbf{1}_{S_{ij} == 0} K_{ij} X_j\Big\|^2_2
\end{equation}
We can lower bound the norm of the sum by replacing $X_j$ with the projection of $X_j$ onto $X_i$:
\begin{align}
    \| \nabla N_{SG}\|^2_F &= \sum_{i=1}^n \Big\|\sum_j^n \mathbf{1}_{S_{ij} == 0} K_{ij} X_j\Big\|^2_2\\
    &\geq \sum_{i=1}^n \Big\| \sum_{j}^n \mathbf{1}_{S_{ij} == 0} K_{ij} \left(\frac{K_{ij}}{\|X_i\|}\right) \left(\frac{X_i}{\|X_i\|}\right) \Big\|_2^2
\end{align}
Because the dot product between all pairs of embeddings is assumed to be positive, the norm of the sum is at most the sum of the norms:
\begin{align}
    \| \nabla N_{SG}\|^2_F &\geq \sum_{i=1}^n \sum_{j}^n \mathbf{1}_{S_{ij} == 0} \left(\frac{K_{ij}}{\| X_i \|}\right)^4\\
    &\geq \left(\frac{\mathcal{C}^2}{\beta_{\text{max}}}\right)^2\left(n^2 - m\right)
\end{align}
where above $m$ is the number of non-zero entries of $S$. 

Combining the bounds on the numerator and denominator, we have:
\begin{equation}
    \frac{\| \nabla N'_{SG} - \nabla N_{SG}\|^2_F}{\| \nabla N_{SG}\|^2_F} \leq \frac{\beta_{\text{max}}^3}{\mathcal{C}^4} \frac{m}{n^2-m}
\end{equation}
The left term is a constant given the assumption on constriction and non-vanishing or infinite embedding norms. Further, because the graph is sparse ($m$ is $o\left(n^2\right)$), the second term goes to zero as $n \to \infty$.
\end{proof}

\subsection{Proof for Proposition \ref{prop:sg-regularization}}
\subsubsection{Lemma for Proof of Proposition \ref{prop:sg-regularization}}
\begin{lemma} \label{lem:sigmoid-vs-x}
Call $f(x) = \log(1+\exp(x))$. We show that both $f(x) - x \leq \exp(-x)$ and $| \nabla_x (f(x) - x)| \leq \exp(-x)$, i.e. have vanishing exponential tails.
\end{lemma}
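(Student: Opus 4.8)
The plan is to prove the two claimed bounds directly from the definition $f(x) = \log(1 + \exp(x))$, exploiting that $f$ is the softplus function and its derivative is the sigmoid. The key observation that unifies both inequalities is that $f(x) - x = \log(1 + \exp(x)) - \log(\exp(x)) = \log(1 + \exp(-x))$, which rewrites the ``gap'' in a form where the exponential decay is manifest. I would lead with this algebraic simplification, since it is the single step that makes everything else routine.

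For the first bound, once I have $f(x) - x = \log(1 + \exp(-x))$, I would invoke the elementary inequality $\log(1 + u) \leq u$ valid for all $u > -1$, applied with $u = \exp(-x) > 0$. This immediately yields $f(x) - x = \log(1 + \exp(-x)) \leq \exp(-x)$, as desired. I would note that this holds for all real $x$, not just positive $x$, which is what we want.

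For the second bound, I would compute $\nabla_x f(x) = \sigma(x) = \frac{\exp(x)}{1 + \exp(x)} = \frac{1}{1 + \exp(-x)}$, so that $\nabla_x(f(x) - x) = \sigma(x) - 1 = -\frac{\exp(-x)}{1 + \exp(-x)}$. Taking absolute values gives $|\nabla_x(f(x) - x)| = \frac{\exp(-x)}{1 + \exp(-x)} \leq \exp(-x)$, where the last step uses that the denominator $1 + \exp(-x) \geq 1$. This closes the second inequality with the same exponential tail.

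I do not anticipate a genuine obstacle here, as both statements reduce to the softplus-to-sigmoid identity plus the standard $\log(1+u) \leq u$ bound; the only thing to be careful about is confirming that both inequalities hold uniformly over all $x \in \Re$ (so that the lemma can be applied to arbitrary embedding dot products later in Proposition~\ref{prop:sg-regularization}), rather than merely in an asymptotic regime. The ``vanishing exponential tail'' phrasing in the statement signals that the intended use is for large positive $x$ (corresponding to high constriction $\mathcal{C} > 0$, where embedding dot products grow), so I would emphasize that as $x \to \infty$ both $f(x) - x$ and its gradient decay like $\exp(-x)$, which is precisely what allows the negative function $N'_{SG}$ to be approximated by the linear term that produces the dimension-mean regularizer.
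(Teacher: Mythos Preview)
Your proof is correct and follows essentially the same approach as the paper: both arguments reduce the first bound to the identity $f(x)-x=\log(1+e^{-x})$ together with $\log(1+u)\le u$, and both handle the gradient bound by writing $\sigma(x)-1$ as a fraction with denominator at least $1$ (the paper uses $-1/(1+e^{x})\ge -e^{-x}$, you use $e^{-x}/(1+e^{-x})\le e^{-x}$, which are equivalent). Your write-up is slightly more streamlined in making the $\log(1+e^{-x})$ rewrite explicit up front, but the underlying ideas are identical.
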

\begin{proof}
    First, note that $\log(x) \leq x - 1$. Since $e^{-x}(1+e^x) = 1+e^{-x}$, we have,
    \begin{align*}
        \log(e^{-x}(1+e^x)) &\leq (1+e^{-x}) - 1 \\
        \log(1+\exp(x)) - x &\leq \exp(-x) \\
        f(x) - x &\leq \exp(-x),
    \end{align*}
    where the first line applies the bound on $\log(x)$ to the equality, the second line organizes terms, and the third applies the definition of $f(x)$.
    Now we have $\nabla_x (f(x) - x) = \sigma(x) - 1$. For this,
    \begin{align*}
        \sigma(x) - 1 = \frac{-1}{1+\exp(x)} \geq \frac{-1}{\exp(x)} = -\exp(-x).
    \end{align*}
    Where the inequality follows from reducing the value of the denominator. This concludes the proof. 
\end{proof}

\subsubsection{Proof for Proposition \ref{prop:sg-regularization}}

\begin{proof}
As in the proof for Proposition \ref{prop:all-to-all-approx}, let us define the similarity matrix $K = \sigma\left(XX^T\right)$.

The gradient for $N'_{SG}$, defined in Eq.~\eqref{eqn:sg-all-to-all}, is:
\begin{equation}
\nabla_X N'_{SG} = 2K X
\end{equation}

The constriction $\mathcal{C}$ is the minimum value of the matrix K. From lemma \ref{lem:sigmoid-vs-x}, we know that as constriction increases, the difference between $1$ and each of the entries of $K$ vanishes exponentially. Thus, there is a vanishing difference between $\frac{1}{2}\nabla_X N'_{SG}$ and $\mathbf{1}X$, where $\mathbf{1}$ is the $n \times n$ all-ones matrix. Note that $\mathbf{1}X$ is also the gradient of the dimension regularizer $R$ introduced in Eq.~\eqref{eqn:mean-regularizer}. Putting these together:
\begin{align}
    \left\| \frac{1}{2} \nabla_X N'_{SG} - \nabla R \right\|_2^2 &=  \left\| \left(\sigma\left(XX^T\right) - \mathbf{1} \right) X\right\|_2^2\\
    &\leq \sum_{i=1}^n \left\| \sum_{j=1}^n \left(\sigma\left(X_i^TX_j\right) - 1 \right) X_j \right\|^2\\
    &\leq \sum_{i=1}^n \left(\sum_{j=1}^n \left(\sigma\left(X_i^TX_j\right) - 1 \right)^2 \| X_j\|^2 \right)\\
    & \leq \left(\frac{n}{e^{\mathcal{C}}}\right)^2 \beta_{\text{max}}
\end{align}
In the above, $\beta_{\text{max}}$ is a constant and corresponds to the maximum embedding norm among all embeddings in $X$.

Thus, as $\mathcal{C}$ increases, the difference between the gradient of $\frac{1}{2}N'_{SG}$ and $R$ vanishes exponentially. By extension, the difference between gradient descent on $N'_{SG}$ with a step-size of $\eta$ and gradient descent on $R$ with a step-size of $2\eta$ vanishes with increasing $\mathcal{C}$. 
\end{proof}

\section{Experimental Methodology}\label{sec:appendix-methodology}

\paragraph{Hyperparameter Optimization.} To reduce the search space, we sequentially optimize the learning rate, node2vec random-walk, and augmentation parameters following the routine below. The optimal values are documented in our code repository.
\begin{enumerate}
    \item \textbf{Learning rate.} For each dataset and vanilla algorithm (LINE and node2vec), select the learning rate that maximizes the validation set AUC-ROC among  $\eta \in \{0.001$, $0.01$, $0.1$, $1.0\}$.
    \item \textbf{node2vec random-walk parameters.} For each dataset, run a node2vec grid search over $p, q \in \{0.25, 0.50, 1.0, 2.0, 4.0\}$. For Cora, CiteSeer, and PubMed, select the values $p, q$ that maximize the validation AUC-ROC. For the OGB graphs, select the values that maximize the metric corresponding to the OGB leaderboard (see Table \ref{tab:ogb-metrics}).
    \item \textbf{Augmentation parameters.} For each graph and vanilla model, perform a grid search over $n_{\text{negative}} \in \{5, 10, 100\}$ and $\lambda \in \{0.1, 1.0, 10.0, 100.0\}$. Select the values that maximize the same metrics optimized in the previous step. For Cora, CiteSeer, and PubMed, select the top-five highest-scoring parameter values; for ogbl-collab, select the top four; for the remaining OGB graphs, select the top three. In the final test-set evaluation, the results for $\texttt{II}$ correspond to the best-performing values of $n_{\text{negative}}$ and $\lambda$, as measured on the test set using the metrics specified in step (2).
\end{enumerate}

\section{Supplemental Evaluation}\label{sec:supplemental-eval}

\paragraph{OGB metrics.} In addition to the AUC-ROC metrics shown in the main body, in Table \ref{tab:ogb-metrics}, for the OGB graphs, we also include the metrics reported on the OGB leaderboards. 
\begin{table}[h]
    \centering
    \caption{Performance for OGB graphs according to leaderboard metrics.}
    \begin{tabular}{lllccc}
        \hline
        \multirow{2}{*}{\textbf{Graph}} & \multirow{2}{*}{\textbf{Metric}} & \multirow{2}{*}{\textbf{Model}} & \multicolumn{3}{c}{\textbf{Performance}} \\
        \cline{4-6}
        & & & \texttt{I} & \texttt{II}$^0$ & \texttt{II} \\
        \hline
        \multirow{2}{*}{\textbf{ogbl-collab}} & \multirow{2}{*}{Hits@50} & node2vec & 0.25 & 0.14 & 0.17 \\
        & & LINE & 0.05 & 0.14 & 0.19 \\
        \hline
        \multirow{2}{*}{\textbf{ogbl-ppa}} & \multirow{2}{*}{Hits@100} & node2vec & 0.15 & 0.02 & 0.02 \\
        & & LINE & 0.01 & 0.04 & 0.08 \\
        \hline
        \multirow{2}{*}{\textbf{ogbl-citation2}} & \multirow{2}{*}{MRR} & node2vec & 0.32 & 0.07 & 0.16 \\
        & & LINE & 0.03 & 0.03 & 0.08 \\
        \hline
        \multirow{2}{*}{\textbf{ogbl-vessel}} & \multirow{2}{*}{AUC-ROC} & node2vec & 0.96 & 0.83 & 0.87 \\
        & & LINE & 0.50 & 0.50 & 0.50 \\
        \hline
    \end{tabular}
    \label{tab:ogb-metrics}
\end{table}

\paragraph{Non-uniform negative sampling.} 

\begin{table}[]
    \caption{When SGNS and our augmentation are instantiated with non-uniform negatives, our augmentation still preserves link-prediction performance as measured by AUC-ROC. Column \texttt{I} is SGNS instantiated with the common configuration of $\alpha=3/4$ and $k=5$. Column $\texttt{II}$ is our augmentation instantiated with $\Vec{p}$ as the probability vector of $P_{3/4}$ (see Section \ref{sec:reweighting}). Entries marked as ``-'' denote out-of-memory errors on OGB datasets when $k=5$, highlighting the GPU memory cost of SGNS.}
    \label{table:weighted-negatives}
    \centering
    \begin{tabular}{l|cc|cc}
        \hline
        \multirow{2}{*}{\textbf{Graph}} &
        \multicolumn{2}{c|}{\textbf{node2vec AUC-ROC}} &
        \multicolumn{2}{c}{\textbf{LINE AUC-ROC}} \\
        & \texttt{I} & \texttt{II} & \texttt{I} & \texttt{II} \\
        \hline
        \textbf{CiteSeer} & 0.68 & 0.79 & 0.61 & 0.62 \\
        \textbf{Cora} & 0.84 & 0.85 & 0.60 & 0.66 \\
        \textbf{PubMed} & 0.76 & 0.84 & 0.69 & 0.72 \\
        \textbf{ogbl-collab} & - & 0.96 & 0.71 & 0.78 \\
        \textbf{ogbl-ppa} & - & 0.98 & 0.92 & 0.76 \\
        \textbf{ogbl-citation2} & - & 0.94 & 0.67 & 0.58 \\
        \textbf{ogbl-vessel} & - & 0.88 & 0.50 & 0.50 \\
        \hline
    \end{tabular}
\end{table}

In Section~\ref{sec:reweighting}, we note that SGNS introduces the two hyperparameters $\alpha$ and $k$. $\alpha$ enables non-uniform sampling over nodes, where the probability of sampling a node is proportional to the node degree$^\alpha$. $k$ specifies the number of negative samples per positive pair. In the main body, we analyze both SGNS and our augmentation with the uniform distribution ($\alpha=0$, $k=1$, $\Vec{p}=\Vec{1}$) as this is a common instantiation of SGNS. However, in practice, it is also common to set $\alpha=3/4$ and $k=5$, based on promising empirical results from~\citet{mikolov2013distributed}. In Table~\ref{table:weighted-negatives}, we show that when we instantiate SGNS and our augmentation with non-uniform sampling, our augmentation preserves link-prediction performance. When we set $k=5$ for SGNS, we obtain out-of-memory (OOM) errors on the OGB graphs, illustrating the memory cost of node contrast.

\balance
\end{document}